\documentclass{article}

\usepackage{wrapfig}
\usepackage{mdframed}
\usepackage{amsmath, amsthm}
\usepackage{cancel}
\usepackage{enumitem}
 \usepackage[preprint]{neurips_2025}
 \usepackage{comment}
 \usepackage{framed}
 \newtheorem{lemma}{Lemma}
 \newtheorem{remark}{Remark}
 \newtheorem{definition}{Definition}
 \newmdtheoremenv[
    tikzsetting={rounded corners=5pt},
    backgroundcolor=gray!10,
    linecolor=white,
    innertopmargin=6pt,
    innerbottommargin=2pt,
    skipabove=5pt,
    skipbelow=5pt
]{theorem}{Theorem}
 \newtheorem{corollary}{Corollary}
 \newtheorem{assumption}{Assumption}
 \newmdtheoremenv[
    tikzsetting={rounded corners=5pt},
    backgroundcolor=gray!10,
    linecolor=white,
    innertopmargin=6pt,
    innerbottommargin=2pt,
    skipabove=5pt,
    skipbelow=5pt
]{proposition}{Proposition}
 \newmdtheoremenv[
    tikzsetting={rounded corners=5pt},
    backgroundcolor=gray!10,
    linecolor=white,
    innertopmargin=6pt,
    innerbottommargin=2pt,
    skipabove=5pt,
    skipbelow=5pt
]{conjecture}{Conjecture}
\DeclareMathOperator*{\argmin}{argmin}

\usepackage[utf8]{inputenc} % allow utf-8 input
\usepackage[T1]{fontenc}    % use 8-bit T1 fonts
\usepackage{hyperref}       % hyperlinks
\hypersetup{
    colorlinks,
    linkcolor={red!50!black},
    citecolor={blue!50!black},
    urlcolor={blue!80!black}
}
\usepackage{url}            % simple URL typesetting
\usepackage{booktabs}       % professional-quality tables
\usepackage{amsfonts}       % blackboard math symbols
\usepackage{nicefrac}       % compact symbols for 1/2, etc.
\usepackage{microtype}      % microtypography
\usepackage{xcolor}         % colors
\newcommand{\hbX}{\bar{\mathsf{X}}}

\title{Diffusion Processes on Implicit Manifolds}

\author{%
  Victor Kawasaki-Borruat\thanks{Corresponding author: \texttt{victor.borruat@epfl.ch}}  \\
  Signal Processing Laboratory 2\\ 
  EPFL \\
  \And Clara Grotehans \\
  Institute of Artificial Intelligence \\
  Medical University of Vienna \AND Pierre Vandergheynst  \\
  Signal Processing Laboratory 2\\ 
  EPFL \And Adam Gosztolai\\
  Institute of Artificial Intelligence \\
  Medical University of Vienna
}

\usepackage{todonotes}

\newcommand{\rwgl}{\mathsf{L}}
\newcommand{\discrop}{\mathsf{P}}
\newcommand{\interp}{\mathsf{I}}

\newcommand{\unif}{\textnormal{dvol}_g}
\newcommand{\dd}{\textnormal{d}}

\begin{document}

\maketitle

\begin{abstract}
High-dimensional data are often assumed to lie on lower-dimensional manifolds. We study how to construct diffusion processes on this data manifold using only point cloud samples and without access to charts, projections, or other geometric primitives.
Here, we introduce Implicit Manifold-valued Diffusions (IMDs), a data-driven mathematical formalism for defining stochastic differential equations in the original high-dimensional space that describe drifting Brownian particles evolving intrinsically on the underlying manifold. 
Our construction hinges on approximating the corresponding infinitesimal generator of the diffusion process using a proximity graph over the data and using the carré-du-champ of the generator, which encodes the local tangent spaces of the manifold and lifts the intrinsic process into ambient coordinates. 
We show that as the number of samples grows, our discrete diffusion process converges in law on the space of probability paths to its smooth manifold counterpart.
We further present an Euler-Maruyama scheme for the numerical integration of IMDs. 
We validate our framework using numerical experiments on synthetic manifolds and the MNIST data manifold, showing that IMDs remain confined over the manifold and enable its guided exploration.
Our work provides the mathematical foundation and practical implementations of diffusion processes on data manifolds, opening new avenues for manifold-aware sampling, exploration, and generative modeling.
\end{abstract}

\definecolor{darkblue}{RGB}{0,0,139}

\section{Introduction}\label{sec:intro}

% ---------------- New Intro ---------------------------------
% paragraph 1: explain why stochastic dynamics are needed
Stochastic differential equations (SDEs) model continuous-time dynamics under deterministic drift and random perturbations \citep{oksendal2003stochastic}. In machine learning, they underpin a broad range of methods, including Langevin sampling \citep{bharath2025sampling, neal2011mcmc}, Bayesian inference \citep{welling2011bayesian, mandt2017stochastic}, stochastic optimization \citep{absil2008optimization, xu2018global, bonnabel2013stochastic, boumal2023intromanifolds}, and score-based generative modeling \citep{song2021scorebasedgenerativemodelingstochastic}.
While most existing methods are formulated in Euclidean space $\mathbb R^n$, many datasets are better modeled as concentrating near lower-dimensional manifolds of intrinsic dimension $d \ll n$, known as the \emph{manifold hypothesis} \citep{fefferman2016testing}. Under these conditions, a diffusion process $(Z_t)_{t\geq 0}$ over a manifold $\mathcal{M}$ may be described by the SDE 
\begin{equation}\label{eq:manifold-diffusion}
    d Z_t = b(Z_t)d t + \sigma(Z_t)d B^\mathcal{M}_t,
\end{equation}
where $b(z),\sigma(z)$ are a smooth drift and diffusion fields, and $B^\mathcal{M}_t$ is Brownian motion on $\mathcal{M}$ \citep{hsu2002stochastic, stroock2007multidimensional}. 
Analytically defining or numerically approximating the dynamics of Eq. \eqref{eq:manifold-diffusion} typically assumes that $\mathcal{M}$ is known, which allows explicitly constructing geometric primitives such as charts \citep{bakry2013analysis},   
retraction operators 
\citep{hairer2006geometric, cheng2022RiemannianMCMC, cheng2022theory, debortoli2022riemannianscorebasedgenerativemodelling, zaghen2025riemannianvariationalflowmatching, gao2023data}, exponential maps \citep{malham2008stochastic} or geodesics \citep{chen2023flow}. Yet, in most practical scenarios, $\mathcal{M}$ can be assumed to exist, but its explicit construction based on the above primitives is either intractable or impractical. Instead, one desires to construct the diffusion process in a data-driven manner given a point cloud $X_N = \{x_i\}_{i=1}^N$ in  $ \mathbb R^n$.

%Here, where $F:\mathcal{M}\to \mathbb R^n$ is an isometric embedding, so that the data are available only through their ambient representation. 
Here we introduce a mathematical formalism for constructing, from the point cloud alone, an ambient-space stochastic process $Y_t\in F(\mathcal{M})\subset \mathbb R^n$, where $F:\mathcal{M}\to \mathbb R^n$ is an isometric embedding, whose evolution matches the intrinsic diffusion $Z_t\in\mathcal{M}$ in the large-sample limit. 
\begin{wrapfigure}{r}{0.30\linewidth}
  \centering
  \vspace{15pt}
  \includegraphics[width=\linewidth]{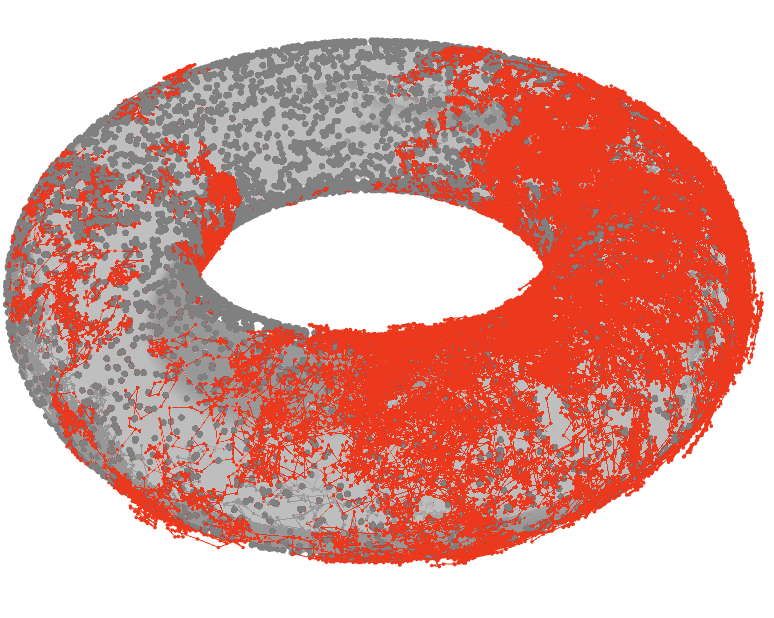}
  \caption{ No prior knowledge of $\mathbb T^2$ beyond the samples (gray) is required to compute the displayed Brownian motion (red).}
  \label{fig:torus-BM}
  \vspace{5pt}
\end{wrapfigure}
To achieve this, we develop an operator-theoretic approach to construct diffusion processes on implicit manifolds. Starting from a proximity graph built from the point cloud $X_N$, we consider the associated random walk graph Laplacian $\rwgl_N$; the discrete generator of a Markov process on the graph. We then show that, in the large-sample limit, $\rwgl_N$ converges to a second-order differential operator $L$, the scaled \emph{infinitesimal generator} of the diffusion $Z_t$ on $\mathcal{M}$ (Theorem \ref{thrm:mosco-conv}).
Moreover, we prove that the constructed process $Y_t$ converges on the space of probability paths to $F(Z_t)$. Theorem \ref{thrm:sampling-guarantees} proves the consistency of the corresponding Euler-Maruyama discretization. Our numerical experiments on synthetic and real-world datasets show that IMDs make for a viable  method to both explore and generate new samples from data manifolds. Figure \ref{fig:torus-BM} depicts a simple visual example of our construction on the torus.
These results provide the first fully data-driven path space-consistent construction of diffusion processes on implicit manifolds, which we dub Implicit Manifold-valued Diffusions (IMDs). 
We summarize our contributions below.
\begin{itemize}
    \item \textbf{Data-driven formulation of manifold  diffusion.}
        We show that stochastic processes on arbitrary data manifolds can be constructed by estimating their infinitesimal generators, without access to charts, projections, or explicit geometric information. 
    \item \textbf{Path-space convergence guarantees.}
        We prove that our data-driven generator converges to the Laplace-Beltrami operator and that the induced stochastic process converges in law on path space $C[0,T]$ to its smooth manifold counterpart in the large-sample limit. 
    \item \textbf{Recovery of intrinsic stochastic dynamics.} We show empirically that IMDs produce smooth on-manifold trajectories between data points and explore directions inaccessible to score-only retraction baselines.
\end{itemize}
We expect that our model will find applications in diverse AI-for-science fields, including computational biology \citep{yim2023se, hollingsworth2018molecular}, robotics \citep{saveriano2023learning}, material science \citep{zaghen2025riemannianvariationalflowmatching}, and neuroscience \citep{perich2025neural, gosztolai25}, where datasets have strong geometric priors yet are often underexplored.
In machine learning, IMDs allow for tangential stochastic exploration of the data manifold, suggesting a complementary perspective to score-based generative modeling under the manifold hypothesis.

\section{Related Works}
\paragraph{Langevin sampling on manifolds}
Stochastic dynamics on Riemannian manifolds are an important tool in geometric sampling \citep{girolami2011riemann, patterson2013stochastic}, providing a foundation for algorithms such as Riemannian Langevin Markov Chain Monte Carlo \citep{cheng2022RiemannianMCMC, cheng2022theory} and Langevin diffusion-based sampling on manifolds \citep{bharath2025sampling}. This perspective is distinct from Riemannian optimization \citep{absil2008optimization, boumal2023intromanifolds}, as the goal is to construct a stochastic process whose invariant law is the target distribution, not to follow a deterministic trajectory. These methods are commonly discretized using geometric Euler-type schemes, also known as geodesic random walks, in which tangent-space increments are mapped back to the manifold via the exponential map $\exp_x$ or a retraction \citep{jorgensen1975central, bharath2025sampling}. They therefore require access to Riemannian gradients and manifold-specific geometric primitives. In contrast, we aim to construct such stochastic dynamics on implicit manifolds without assuming access to such primitives.

\paragraph{Manifold learning} Existing methods in the implicit manifold setting typically attempt to first recover the manifold structure, before approximating the dynamics. One approach involves learning a latent representation  \citep{alberti2024manifold, niu2023intrinsic} before simulating Brownian motion on the latent coordinates. The use of Diffusion Maps \citep{coifman2006diffusion} provides a principled limit from random walks on graphs to manifold diffusions, enabling latent representations of diffusions \citep{giovanis2025generative}, or estimation of diffusion generators \citep{li2023diffusion} for Langevin sampling. %Such use of Diffusion Maps allows for correct sampling of marginals, but not of the entire path.

\paragraph{Diffusion geometry} A recent line of work in data-driven geometry \citep{jones2024diffusiongeometry, jones2026computing} recovers the implicit manifold's Riemannian geometry from diffusion operators \citep{bakry2013analysis} in the ambient Euclidean space. 
The possibility to compute the carré-du-champ (CDC) using Diffusion Maps inspired manifold-aware generative modeling methods using the CDC in flow matching \citep{bamberger2025carreduchampflow, bamberger2026riemannian}. This method, like IMDs, relies on the computation of the CDC operator to construct the data manifold's geometry. 

\paragraph{Score-based diffusion models} Another avenue involves the connections between score-based generative models and implicit manifolds \citep{pidstrigach2022score, de2022convergence, chakraborty2026generalization, buchanan2025edge, li2025scores}. Kharitenko et al. \citep{kharitenko2025landingscoreriemannianoptimization} propose score-based retractions as approximate
projections onto $F(\mathcal{M})$ to recover manifold operations (such as projections of the Riemannian gradient) in the implicit manifold setting. This allows for good approximations of deterministic Riemannian optimization tasks, but not manifold-valued diffusions.

\paragraph{Geometric limits of graph Laplacians} This is the theoretical backbone of IMDs, and consists of the main proof techniques. Such limits were pioneered in a series of work by Belkin and Niyogi \cite{belkin2003laplacian, belkin2004semi, belkin2008towards}, later refined to pointwise convergence of the random walk graph Laplacian to the Laplace-Beltrami operator \cite{hein2007graph}. More recently, spectral convergence was established by considering continuum limits of point clouds \citep{trillos2015rate, trillos2018variational, garcia2016continuum, trillos2018errorestimatesspectralconvergence}, or metric geometry concepts \citep{burago2015graph}; both of which we use to prove Theorem \ref{thrm:mosco-conv}.

% ------------------------------------------------------------

\section{Background on diffusion processes on manifolds}\label{sec:diffusions-on-manifolds}

Here we provide a primer on constructing stochastic processes on Riemannian manifolds.

\subsection{Calculus on Riemannian manifolds}\label{ssec:riemann-manifolds}

Let ($\mathcal{M}$, g) be a $d$-dimensional Riemannian manifold equipped with the metric $g$. We assume $\mathcal{M}$ is complete, connected and has no boundary. For each $z\in \mathcal{M}$, $g$ is defined as an inner product on the tangent space $T_z\mathcal{M}$ \citep{do1992riemannian}:
\begin{equation}\label{eq:riemannian-inner-prod}
    \langle U,V\rangle_g := \sum_{i,j=1}^d g_{ij}U^iV^j,
\end{equation}
for tangent vectors $U ,V \in T_z\mathcal{M}$, where $g_{ij}$ denotes the metric tensor in local coordinates.
The gradient of a function $b\in\mathcal{C}^\infty(\mathcal{M})$ in local (manifold-centric) basis $(\partial_1,\dots,\partial_d)\in T_z\mathcal{M}$ is given by
\begin{equation}\label{eq:riemannian-gradient}
    \left(\nabla_\mathcal{M}b\right)^i := \sum_{j=1}^dg^{ij}\partial_jb,
\end{equation}
where $g^{ij}:=g_{ij}^{-1}$ is the Riemannian co-metric. 
The associated Laplace-Beltrami operator is  
\begin{equation}\label{eq:laplace-beltrami}
    \Delta_{\mathcal M} b := \operatorname{div}(\nabla_{\mathcal M} b).
\end{equation}

To work with data densities, we further assume $\mathcal{M}$ is oriented and admits a volume form $\unif$. This volume form induces a measure on $\mathcal M$, with total volume $\textnormal{Vol}(\mathcal{M}) := \int_\mathcal{M}\unif$.

We will call variables in local coordinates, e.g., $z:=(z^1, \dots , z^d)\in\mathcal{M}$ as intrinsic and the corresponding differential operators $\nabla_\mathcal{M},\Delta_\mathcal{M}$ will define the intrinsic geometry. Meanwhile, variables $x:=(x^1, \dots, x^n)\in\mathbb{R}^n$ are termed extrinsic and obtained through an isometric embedding\footnote{$F$'s existence is guaranteed by Nash’s isometric embedding theorem for large enough $n$.} into ambient Euclidean space, $F:\mathcal{M}\to \mathbb{R}^n$, equipped with the usual differential operators $\nabla, \Delta$. %See Appendix \ref{apdx:riem-geom} for further details. 

\subsection{Stochastic processes on manifolds from infinitesimal generators}\label{ssec:generators}

We now discuss the construction of stochastic processes on $\mathcal{M}$ by using differential operators. 
The correspondence between $\Delta_\mathcal{M}$ and diffusions on $\mathcal{M}$ is well-known \citep{stroock2007multidimensional}. Specifically, for any test function $f\in \mathcal C^\infty(\mathcal{M})$, the SDE from Eq. \eqref{eq:manifold-diffusion} has an associated partial differential equation 
\begin{equation}
    \partial_tf = Lf,
\end{equation}
where $L$ is called the \emph{infinitesimal generator} 
\begin{equation}\label{eq:manifold-generator}
    L = \frac{1}{2}\sum_{i,j=1}^d a^{ij}(z)\frac{\partial}{\partial z^i}\frac{\partial}{\partial z^j} + \sum_{i=1}^db^i(z)\frac{\partial}{\partial z^i},
\end{equation}
with $b(z)$ a drift field and $a(z)=\sigma(z) \sigma(z)^\top$ the noise covariance restricted to $T_z\mathcal{M}$.
The Laplace-Beltrami operator $\frac{1}{2}\Delta_\mathcal{M}$ is a special case for $b(z)=0$, $a(z)=\mathbb{I}$, and corresponds to an isotropic Brownian motion $dZ_t = dB_t^\mathcal{M}$ on $\mathcal{M}$ whose invariant law is the volume measure, $\unif$ \citep{hsu2002stochastic, grigoryan2009heat, bakry2013analysis}.

However, this classical theory does not generalise to arbitrary manifolds, where $\Delta_\mathcal{M}$ is typically unknown. 
An alternative is offered by diffusion geometry \citep{jones2024diffusiongeometry}, which estimates differential operators directly from data. 
The central object in diffusion geometry is the 
%We now consider the extrinsic embedded manifold $F(\mathcal{M})$. 
\emph{carré-du-champ} (CDC)
\begin{equation}\label{eq:CDC-def}
\Gamma(f,g)=L(fg)-f {L}g-g {L}f, \quad \Gamma(f,f) \equiv \Gamma(f)
\end{equation}
which relates the metric structure of the data to the infinitesimal generator $L$ of a stochastic process acting on functions $f, g\in C^\infty(\mathcal{M})$. Specifically, for $L=\Delta_\mathcal{M}$, $\Gamma$ reduces to the Riemannian metric $\Gamma_{\Delta_\mathcal{M}}(f,g) = g(\nabla_\mathcal{M} f, \nabla_\mathcal{M} g)$ \citep{jones2026computing}.
%The following proposition formalizes the geometric intuition %the isometric embedding $F$ induces an inner product in ambient space, $\langle U,V\rangle_g = \langle \dd F(U), \dd F(V)\rangle_{\mathbb{R}^n}$ and the CDC encodes this metric via $\Gamma(f,h)=g(\nabla_\mathcal{M} f,\nabla_\mathcal{M} h)$. In extrinsic coordinates $(x^1, \dots, x^n)$, 
Further, the CDC acts as a tangent space projector in ambient coordinates.
%We formalize this geometric intuition in Proposition \ref{prop:CDC-projector}.
\begin{proposition}\label{prop:CDC-projector}
    Let $\Gamma$ be the CDC operator associated with $\Delta_\mathcal{M}$ as per Eq. \eqref{eq:CDC-def}. Then, for ambient coordinates $(x^1, \dots x^n)$, $\Gamma^{ij}(x)=\Gamma(x^i,x^j)=\langle \pi_x(e_i),\pi_x(e_j)\rangle$ at a point $x\in F(\mathcal{M})$ is a projection matrix onto $T_x\mathcal{M}$. That is, we have 
    \begin{equation}
        \Gamma^{ij}(x) = \langle \pi_x(e_i), \pi_x(e_j) \rangle,
    \end{equation}
    where $\pi_x$ is the orthogonal projector $\pi_x:\mathbb{R}^n\to T_xF(\mathcal{M})$
\end{proposition}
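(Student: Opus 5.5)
The plan is to compute $\Gamma(x^i,x^j)$ directly from the definition \eqref{eq:CDC-def} with $L=\Delta_\mathcal{M}$, using the Leibniz rule for the Laplace--Beltrami operator, and then identify the resulting Riemannian inner product of intrinsic gradients with an inner product of projected ambient vectors via the isometric embedding $F$. Throughout, the ambient coordinate functions $x^i$ are understood as functions on $\mathcal{M}$ through the pullback, i.e.\ $x^i := x^i\circ F\in\mathcal{C}^\infty(\mathcal{M})$.

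The first step is to verify the identity $\Gamma_{\Delta_\mathcal{M}}(f,h)=g(\nabla_\mathcal{M} f,\nabla_\mathcal{M} h)$ quoted before the proposition. We expand $\Delta_\mathcal{M}(fh)=\operatorname{div}(f\nabla_\mathcal{M} h + h\nabla_\mathcal{M} f)=f\Delta_\mathcal{M} h + h\Delta_\mathcal{M} f + 2 g(\nabla_\mathcal{M} f,\nabla_\mathcal{M} h)$. Subtracting the two first-order terms gives $\Gamma(f,h)=2g(\nabla_\mathcal{M} f,\nabla_\mathcal{M} h)$. The factor $2$ is a normalization issue: with the generator $\tfrac12\Delta_\mathcal{M}$ one obtains exactly $g(\nabla_\mathcal{M} f,\nabla_\mathcal{M} h)$. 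I would state explicitly that we adopt the convention consistent with the cited identity, namely $\Gamma=\tfrac12(L(fh)-fLh-hLf)$, or equivalently $L=\tfrac12\Delta_\mathcal{M}$.

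The second step, which I regard as the key one, is to compute the intrinsic gradient of a coordinate function. At $z\in\mathcal{M}$ with $x=F(z)$, take any $U\in T_z\mathcal{M}$. Then $g(\nabla_\mathcal{M} x^i,U)=\dd x^i(U)=\langle e_i,\dd F(U)\rangle_{\mathbb{R}^n}$. Because $F$ is isometric, $\dd F$ identifies $T_z\mathcal{M}$ isometrically with $T_xF(\mathcal{M})\subset\mathbb{R}^n$, so this equals $\langle \pi_x(e_i),\dd F(U)\rangle$. Uniqueness of the Riesz representative then gives $\dd F(\nabla_\mathcal{M} x^i)=\pi_x(e_i)$. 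Consequently
\begin{equation*}
\Gamma^{ij}(x)=g(\nabla_\mathcal{M} x^i,\nabla_\mathcal{M} x^j)=\langle \dd F\nabla_\mathcal{M} x^i,\dd F\nabla_\mathcal{M} x^j\rangle=\langle \pi_x(e_i),\pi_x(e_j)\rangle ,
\end{equation*}
which is the displayed formula.

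The final step is to show that the matrix $\Gamma(x)=(\Gamma^{ij}(x))$ is the orthogonal projector onto $T_xF(\mathcal{M})$. Since $\pi_x$ is self-adjoint and idempotent, $\langle \pi_x e_i,\pi_x e_j\rangle=\langle e_i,\pi_x e_j\rangle$, so $\Gamma^{ij}(x)$ is precisely the $(i,j)$ matrix entry of $\pi_x$ in the standard basis. Hence $\Gamma(x)$ is symmetric, satisfies $\Gamma(x)^2=\Gamma(x)$, and has range $T_xF(\mathcal{M})$. I expect no real obstacle beyond the bookkeeping of the factor $2$ and of the pullback identification; the Riesz argument in the second step carries the actual content of the proof.
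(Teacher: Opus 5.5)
Your proposal is correct and takes essentially the paper's route: both proofs hinge on identifying $\dd F(\nabla_\mathcal{M} x^i)$ with $\pi_x(e_i)$ through the Riesz representation of $\dd x^i$, combined with $\Gamma(f,h)=g(\nabla_\mathcal{M} f,\nabla_\mathcal{M} h)$. Your version is more careful than the paper's in two places: you derive that identity from the Leibniz rule and rightly note that Eq.~\eqref{eq:CDC-def} with $L=\Delta_\mathcal{M}$ actually gives $2g(\nabla_\mathcal{M} f,\nabla_\mathcal{M} h)$, a normalization the paper glosses over by citing the identity; and you check explicitly that $\langle \pi_x e_i,\pi_x e_j\rangle=\langle e_i,\pi_x e_j\rangle$, so that $(\Gamma^{ij}(x))$ is the matrix of $\pi_x$, which the paper leaves implicit.
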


%The covariance structure $\sigma$ from Eq. \eqref{eq:manifold-diffusion} is captured by the 
As shown in Ref. \cite{hsu2002stochastic}, this property allows using $\Gamma$ to lift the intrinsic operator $L$ to an equivalent ambient-space operator $\tilde{{L}}$. % whose trajectories remain on $F(\mathcal{M})$ when initialized on $F(\mathcal{M})$. 
\begin{theorem}[\citep{hsu2002stochastic}]\label{thrm:L-extension}
Let $x:=(x^1, \dots , x^n)$ be ambient coordinates, $f^i(x):=x^i$ be the coordinate function on  $\mathcal{M}$, 
%\begin{equation}\label{eq:ambient-SDE-coeffs}
%    \tilde{a}^{ij} := \Gamma(f^i,f^j), \quad \tilde{v}^i := Lf^i,
%\end{equation}
%where $\Gamma$ is as in Eq. \eqref{eq:CDC-def}, 
and define the operator
% \begin{equation}\label{eq:ambient-diffusion-generator}
%     \tilde{L} := \frac{1}{2}\sum_{i,j = 1}^n\tilde{a}^{ij}\frac{\partial^2}{\partial x^i \partial x^j} + \sum_{i=1}^n\tilde{v}^i\frac{\partial}{\partial x^i}.
% \end{equation}
\begin{equation}\label{eq:ambient-diffusion-generator}
    \tilde{L} := \frac{1}{2}\sum_{i,j = 1}^n\Gamma(f^i,f^j)\frac{\partial^2}{\partial x^i \partial x^j} + \sum_{i=1}^nLf^i\frac{\partial}{\partial x^i}.
\end{equation}

Then, for any $\tilde{f}\in\mathcal{C}^\infty(\mathbb{R}^n)$ that extends $f\in\mathcal{C}^\infty(\mathcal{M})$ to ambient coordinates, namely $f=\tilde{f}\circ F$ for an isometric embedding $F$, we have $\tilde{L}\tilde{f} = Lf$.
\end{theorem}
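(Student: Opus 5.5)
The plan is a chain-rule computation for diffusion operators. The key fact is that $L$ is a second-order operator without zeroth-order term, as in Eq. \eqref{eq:manifold-generator}. Any such operator obeys the diffusion chain rule: for smooth $\Phi:\mathbb{R}^n\to\mathbb{R}$ and functions $f^1,\dots,f^n\in\mathcal{C}^\infty(\mathcal{M})$,
\begin{equation*}
L\,\Phi(f^1,\dots,f^n) = \sum_{i=1}^n \partial_i\Phi(f^1,\dots,f^n)\,Lf^i + \frac{1}{2}\sum_{i,j=1}^n \partial_i\partial_j\Phi(f^1,\dots,f^n)\,\Gamma(f^i,f^j).
\end{equation*}
This holds with the normalisation of $\Gamma$ in Eq. \eqref{eq:CDC-def}, adjusted by the factor $\tfrac12$ in $L$ so that $\Gamma$ is the symmetric form $\sum a^{kl}\partial_k f\,\partial_l g$ for the generator in Eq. \eqref{eq:manifold-generator}. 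Applying the chain rule with $\Phi=\tilde f$ and $f^i=x^i\circ F$ gives $\tilde f\circ F = f$. Its left-hand side is then $Lf$ and its right-hand side is $(\tilde L\tilde f)\circ F$, which is exactly the statement of Theorem \ref{thrm:L-extension}.

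The steps are as follows. First, I verify the chain rule in local coordinates $(z^1,\dots,z^d)$. Write $u=\Phi\circ(f^1,\dots,f^n)$. The ordinary chain rule gives $\partial_k u=\sum_i \partial_i\Phi\,\partial_k f^i$ and
\begin{equation*}
\partial_k\partial_l u=\sum_{i,j}\partial_i\partial_j\Phi\,\partial_kf^i\,\partial_lf^j+\sum_i\partial_i\Phi\,\partial_k\partial_l f^i.
\end{equation*}
Substituting these into $\tfrac12\sum a^{kl}\partial_k\partial_l+\sum b^k\partial_k$ and regrouping, the terms carrying $\partial_i\Phi$ collect into $\sum_i\partial_i\Phi\,Lf^i$. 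The remaining terms give $\tfrac12\sum_{i,j}\partial_i\partial_j\Phi\sum_{k,l}a^{kl}\partial_kf^i\,\partial_lf^j$.

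Second, I identify that inner sum with $\Gamma(f^i,f^j)$. Expanding $L(fg)$ by the product rule, every first-order and zeroth-order contribution cancels in $L(fg)-fLg-gLf$. What survives is the cross term $\sum a^{kl}\partial_kf\,\partial_lg$, up to the normalising constant fixed by the paper's convention. For $L=\tfrac12\Delta_{\mathcal M}$ this is $g(\nabla_{\mathcal M}f,\nabla_{\mathcal M}g)$, consistent with the remark following Eq. \eqref{eq:CDC-def}.

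Third, I evaluate at points of $F(\mathcal M)$. The coefficients $\Gamma(f^i,f^j)$ and $Lf^i$ are functions on $\mathcal M$, so they are defined only on $F(\mathcal M)$. Hence $\tilde L\tilde f$ is meaningful there, or on a neighbourhood after any smooth extension of the coefficients. The identity $\tilde L\tilde f=Lf$ is read pointwise on $F(\mathcal M)$.

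I should also note why the result does not depend on the choice of extension $\tilde f$. By Proposition \ref{prop:CDC-projector}, $\Gamma^{ij}$ is the tangential projector. Therefore the second-order part only sees tangential second derivatives of $\tilde f$. Any normal first-derivative component of $\tilde f$ enters through $Lf^i$, which is the mean-curvature vector when $L=\tfrac12\Delta_{\mathcal M}$. These contributions cancel automatically, because both sides equal $L(\tilde f\circ F)$. No separate argument is needed, since the chain-rule identity already shows it.

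The main obstacle is bookkeeping the normalisation constant between $\Gamma$ as defined in Eq. \eqref{eq:CDC-def} and the $\tfrac12$ in $\tilde L$ and in Eq. \eqref{eq:manifold-generator}. With $L=\tfrac12\sum a\partial\partial+\dots$, the literal expression $L(fg)-fLg-gLf$ equals $\sum a^{kl}\partial_kf\,\partial_lg$, with no $\tfrac12$. The chain rule then needs $\tfrac12\Gamma$, which matches the $\tfrac12$ in Eq. \eqref{eq:ambient-diffusion-generator}. I would fix this convention explicitly at the start, with $a=\mathbb I$ for $\tfrac12\Delta_{\mathcal M}$, and check it against the simple case $\Phi(y)=y^iy^j$. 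In that case both sides reduce to the definition of $\Gamma$.
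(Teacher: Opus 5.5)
Your proposal is correct and follows essentially the same route as the paper: expand $\partial_k(\tilde f\circ F)$ and $\partial_k\partial_l(\tilde f\circ F)$ by the chain rule in local coordinates, substitute into $L=\tfrac12 a^{kl}\partial_k\partial_l+b^k\partial_k$, and regroup into $\sum_i (Lf^i)\,\partial_i\tilde f+\tfrac12\sum_{i,j}\Gamma(f^i,f^j)\,\partial_i\partial_j\tilde f$. Your product-rule check that $L(fg)-fLg-gLf=\sum_{k,l}a^{kl}\partial_kf\,\partial_lg$ confirms that the $\tfrac12$ in $\tilde L$ is exactly right for the paper's unnormalised $\Gamma$, and it makes explicit an identification the paper uses without comment.
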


\begin{remark}
    The operator $\tilde L$ should be understood as an ambient representation of the intrinsic generator $L$; not as a new generator on all of $\mathbb{R}^n$. In other words, $\tilde L \tilde f$ is only defined through its restriction to $F(\mathcal{M})$, where
    \(
    (\tilde L\tilde f)\big|_{F(\mathcal{M})} = L\big(\tilde f\big|_{F(\mathcal M)}\big).
    \)
    Thus, $\Gamma$ determines the tangential noise, while the first-order term $L$ reproduces the intrinsic manifold dynamics.
\end{remark}

According to Theorem \ref{thrm:L-extension}, the stochastic process $Z_t$ on $\mathcal{M}$ (Eq. \eqref{eq:manifold-diffusion}) is equivalent to an $\mathbb R^n$-valued diffusion process $Y_t$ in $F(\mathcal{M})$. Namely,
% \begin{equation}\label{eq:ambient-manifdold-SDE}
%     d {Y}_t = \left(\tilde{v}({Y}_t) + \tilde{b}(Y_t)\right)dt + \tilde{a}^{1/2}(Y_t)d W_t,
% \end{equation}
\begin{equation}\label{eq:ambient-manifdold-SDE}
    d {Y}_t = \left(L + \tilde{b}(Y_t)\right)dt + \Gamma^{1/2}(Y_t)d W_t,
\end{equation}
where $W_t$ is a standard Brownian motion in $\mathbb R^n$ and $\tilde{b}:F(\mathcal M)\to \mathbb R$ is a smooth extension of $b:\mathcal M \to \mathbb R$. Importantly, this ambient space representation of the infinitesimal generator will allow us to construct it directly from data.

\section{Data-driven construction of diffusions on implicit manifolds}\label{sec:data-driven-SDE}

We now turn to estimating $L$ from a point cloud of i.i.d. samples $X_N := \{x_i\}_{i=1}^N \subset \mathbb R^n$ drawn from the volume measure $\unif$. Our construction is based on random walks on graphs. For a global neighbourhood radius $\epsilon >0$, we take the symmetric unweighted adjacency matrix $W_{ij}$, which can either be taken as a $W_{ij}:= \mathbf{1}\left\{ \|x_i-x_j\|_2^2 \leq \epsilon \right\}$ or a more general class of symmetric normalized kernels (see Appendix \ref{apdx:graph-laplacian-convergence}).
%which we symmetrize by imposing $W_{ij} \overset{!}{=} W_{ji}$. 
%and define the associated row-stochastic random walk matrix as
%\begin{equation}\label{eq:RW-matrix}
%    P_{ij} := \frac{W_{ij}}{m_i},
%\end{equation}
We then define the random walk graph Laplacian (RWGL) \citep{trillos2018errorestimatesspectralconvergence}, 
\begin{equation}\label{eq:RWGL-action}
    (\rwgl_N u)(x_i) := \frac{c}{\epsilon^2}\sum_{i\sim j}\frac{W_{ij}}{m_i}\left(u(x_i)-u(x_j)\right), 
\end{equation}
scaled by $c/\epsilon^2$, $c>0$, where $u:X_N\to \mathbb R$ is a function over the nodes and $m_i := \sum_{j=1}^N W_{ij}$ is the node degree. \citep{trillos2018errorestimatesspectralconvergence}.

Proving the convergence of $\rwgl_N$ to the corresponding continuous operator requires interpolating the graph Laplacian:
\begin{equation}\label{eq:smoothed-rwgl}
\mathsf{L} := \discrop_N^* \, \rwgl_N \, \discrop_N : H \to H,
\end{equation}
where $\mathsf{P}_N:H\to H_N$ and $\mathsf P_N^*:H_N\to H$ are identification maps between the Hilbert spaces of $L^2$-functions on the data $H_N := L^2(X_N,\mathbb{R})$ and on the manifold $H := L^2(\mathcal{M},\mathbb{R})$. Intuitively, $\mathsf P_N$ is already present through the action of the embedding $F$, and $\mathsf P_N^*$ acts as a nearest-neighbour approximator, mapping a function assignment $f(y)$ to $f(x_j)$, where $x_j = \argmin_{x\in X_N}\|x-y\|_2^2$ is the closest data point to $y\in\mathbb R^n$. 

%%%%%%%%%%%%%%%%%%%%%%%%%%%%%%%%%%%%%%%%%%%%%%%%%%%%%%%%%
%               Data Driven SDE
%%%%%%%%%%%%%%%%%%%%%%%%%%%%%%%%%%%%%%%%%%%%%%%%%%%%%%%%%
We may now define a data-driven version of the SDE \eqref{eq:ambient-manifdold-SDE}
\begin{equation}\label{eq:data-driven-SDE}
    d\mathsf{X}_t = \biggl(\mathsf{L}(\mathsf{X}_t) + b(\mathsf{X}_t)\biggr)d t + \mathsf{\Gamma}^{1/2}(\mathsf{X}_t)d W_t,
\end{equation}
where $\mathsf{\Gamma}$ is the CDC operator \eqref{eq:CDC-def} associated to $\mathsf{L}$ from Eq. \eqref{eq:smoothed-rwgl}. 
By Theorem \ref{thrm:L-extension}, Eq. \eqref{eq:data-driven-SDE} is the data-driven, ambient-valued SDE corresponding to Eq. \eqref{eq:manifold-diffusion} that smoothly extends the drift term $b$. 
The following result justifies the operator limit of the discrete generator induced by the graph Laplacian, $L_N$, to the Laplace-Beltrami operator $\Delta_\mathcal{M}$, and asserts that the trajectories induced by these generators converge in law on path space $C[0,T]$ to the corresponding manifold diffusion.

\begin{theorem}\label{thrm:mosco-conv}
    Given data points $x_i\overset{i.i.d.}{\sim}\unif$, let $\mathsf{L}$ be the interpolated random walk Laplacian of the graph over these points, as per Eq. \eqref{eq:smoothed-rwgl}. 
    In the limit $N\rightarrow\infty$ and $\epsilon(N) \rightarrow 0$ 
    %as per Eq. \eqref{eq:eps-decay}
    , $\mathsf{L}$ converges (in the strong resolvent sense) to $c\Delta_\mathcal{M}$, where $c$ is as in Eq. \eqref{eq:RWGL-action}. Further, the probability paths of $\mathsf{X}_t$ (Eq. \eqref{eq:data-driven-SDE}) converge in law to the associated manifold diffusion $Z_t$,
    \begin{equation}
        \mathsf{X}_t \implies Y_t, \quad \text{in $C([0,T],F(\mathcal{M}))$},
    \end{equation}
     %as $N\to\infty$ and $\epsilon(N)\to 0$
     where $Y_t = F(Z_{ct})$ is the extrinsic lifting of $Z_t$
     %the manifold diffusion 
     generated by
    \(
    L = b\cdot\nabla_\mathcal{M}+ \tfrac12 \Delta_\mathcal{M}
    \).
\end{theorem}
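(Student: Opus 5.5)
The proof splits into two parts. First we show that the operators converge in the strong resolvent sense. Then we lift this to path space with a Trotter–Kurtz type theorem.

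For the first part, we use the variational (Dirichlet form) approach of García Trillos et al. Associated to $\rwgl_N$ is the graph Dirichlet energy
\[
E_N(u)=\frac{c}{N\epsilon^{2}}\sum_{i,j}W_{ij}\,\bigl(u(x_i)-u(x_j)\bigr)^2,
\]
which is symmetric with respect to the degree-weighted measure $\mu_N=\frac1N\sum_i m_i\delta_{x_i}$. Because the samples are uniform and $\epsilon(N)\to0$ slowly enough, this measure is, after normalisation, asymptotically uniform. We transport it to $\unif$ using optimal transport maps $T_N$ with $\|T_N-\mathrm{id}\|_\infty=O\bigl((\log N/N)^{1/d}\bigr)$. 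Requiring $\epsilon(N)\gg(\log N/N)^{1/d}$ then gives $\Gamma$-convergence of $E_N\circ\discrop_N$ to $c\,\sigma_\eta\int_{\mathcal M}|\nabla_{\mathcal M}u|^2\,\unif$; we absorb the kernel constant $\sigma_\eta$ into $c$. We also need compactness: a sequence with bounded energy is precompact in $TL^2$. Together these give Mosco convergence of the forms $\mathcal E_N(u)=E_N(\discrop_N u)$ on $H$. Here $\discrop_N^*$ is the nearest-neighbour interpolation, which is an approximate isometry in the sense of Burago–Ivanov–Kurylev. By Kuwae–Shioya, Mosco convergence of closed forms along converging Hilbert spaces is equivalent to strong resolvent convergence of the generators, so $\mathsf L\to c\Delta_{\mathcal M}$.

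For the second part, strong resolvent convergence implies strong convergence of the semigroups $e^{t\mathsf L}\to e^{tc\Delta_{\mathcal M}}$, uniformly for $t$ in compact intervals; this is the Trotter–Kato theorem. Adding the first-order drift $b\cdot\nabla$ is a relatively bounded perturbation with respect to $\Delta_{\mathcal M}$, since its relative bound is $0$ on a compact manifold. The perturbed generators therefore still converge in the strong resolvent sense, for instance by a Neumann-series argument on the resolvents. Theorem~\ref{thrm:L-extension} then identifies the ambient process: the CDC $\mathsf\Gamma$ of $\mathsf L$ converges to the CDC $\Gamma$ of $c\Delta_{\mathcal M}$ applied to the coordinate functions $x^i$. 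This follows from the bilinear formula \eqref{eq:CDC-def} and the convergence of $\mathsf L$ on the smooth functions $x^i$, $x^ix^j$, which needs pointwise consistency (Hein–Audibert–von Luxburg) in addition to resolvent convergence. By Proposition~\ref{prop:CDC-projector}, the limit is the tangent projector. So the coefficients of \eqref{eq:data-driven-SDE} converge to those of \eqref{eq:ambient-manifdold-SDE}, and the matching time change is $t\mapsto ct$.

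For tightness in $C([0,T],\mathbb R^n)$, we use the uniform bound $\|\mathsf\Gamma\|\le C$ and a bound on the drift applied to coordinate functions. Together with Kolmogorov's criterion, these give tightness of the laws of $\mathsf X$. Every limit point then solves the martingale problem for $\tilde L$. This martingale problem is well-posed on $F(\mathcal M)$ because the coefficients are smooth and the projector is tangent, so all limit points coincide and $\mathsf X\Rightarrow F(Z_{ct})$ (Ethier–Kurtz, Thm.~4.2.11 / 4.8.2).

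The main obstacle is the gap between operator convergence in $L^2$, which is all the Mosco argument gives, and the pointwise, uniform control of the coefficients needed for the SDE. The quantities $\mathsf L x^i$ and $\mathsf\Gamma(x^i,x^j)$ have to converge uniformly on a neighbourhood of $F(\mathcal M)$, and the limiting process has to stay on the manifold. For this we plan to use uniform pointwise consistency of the random walk Laplacian on smooth functions, holding with high probability via Bernstein plus a union bound under $N\epsilon^{d+4}/\log N\to\infty$, and then invoke Borel–Cantelli along the sequence. If that fails, we would instead prove the path-space statement for the interpolated Markov chain directly through Ethier–Kurtz's convergence of generators on the core $C^\infty(\mathcal M)$.
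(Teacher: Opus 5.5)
Your first half matches the paper: Mosco convergence of the graph Dirichlet forms, then Kuwae--Shioya. The paper obtains Mosco convergence from the two-sided energy comparisons of Lemma~\ref{lemma:dirichlet-UB-LB}; you obtain it from $\Gamma$-convergence plus $TL^2$ compactness, which is equivalent here. Your remark that $\rwgl_N$ is symmetric only for the degree-weighted inner product corrects the paper's non-symmetric $\mathcal E_N(u,v)=u^\top\rwgl_N v$.

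The second half is genuinely different. The paper never analyses the SDE \eqref{eq:data-driven-SDE}. It proceeds as follows:
\begin{itemize}
\item it takes the Markov process $\mathsf Z$ generated by $\mathsf L$;
\item it gets finite-dimensional convergence by iterating $L^2$ semigroup convergence through the Markov property;
\item it \emph{asserts} a uniform bound $\mathbb E\, d_{\mathcal M}(\mathsf Z_t,\mathsf Z_s)^4\le C|t-s|^2$ for Kolmogorov--Chentsov;
\item it defines $\mathsf X:=F(\mathsf Z)$ and applies continuous mapping, with $b$ and $c$ silently dropped.
\end{itemize}
You instead treat $\mathsf X$ as the solution of \eqref{eq:data-driven-SDE}. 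You use uniform high-probability consistency of $\mathsf Lx^i$ and $\mathsf\Gamma(x^i,x^j)$ near $F(\mathcal M)$, tightness from bounded coefficients, and identification of limit points through the well-posed martingale problem for $\tilde L$. That last step needs only $\mathsf\Gamma$, not $\mathsf\Gamma^{1/2}$, to converge.

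The paper's route is short and uses only $L^2$ operator convergence, but it is really a statement about the graph random walk, not the SDE. Its tightness bound is unproved, and for the pure-jump walk it is false: it fails for $|t-s|\ll\epsilon^2$, and the paths are not in $C([0,T])$, so one must work in Skorokhod space. Moreover, $L^2$ semigroup convergence gives f.d.d.\ convergence only for initial laws with $L^2$ densities, not from a fixed $z$. Your route addresses the process the theorem actually names and handles the drift. The obstacle you single out is exactly what the paper's proof skips. The price is threefold:
\begin{itemize}
\item an extra rate condition: uniform consistency of the drift estimator needs roughly $N\epsilon^{d+2}/\log N\to\infty$, which is stronger than what the Mosco step needs;
\item existence for the prelimit SDE: pure nearest-neighbour lookup gives piecewise-constant coefficients, so you need the Lipschitz smoothing $\Lambda$ or a weak-existence argument;
\item a localisation argument in a tubular neighbourhood.
\end{itemize}
Note also that in your route the resolvent convergence plays no role in the path-space claim.

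Two points need fixing. First, the Neumann-series argument for the drift fails. $\mathsf L$ is bounded, so $(\lambda-\mathsf L)^{-1}$ is an isomorphism of $H$ with no smoothing effect. Hence $b\cdot\nabla(\lambda-\mathsf L)^{-1}$ is not even bounded, let alone small uniformly in $N$, and on $H_N$ the operator $b\cdot\nabla$ has no meaning at all. Relative boundedness with respect to the limit $c\Delta_{\mathcal M}$ does not transfer to the approximants. If you want semigroup convergence with drift, use Trotter--Kato with consistency on the core $C^\infty(\mathcal M)$. Since $b$ enters \eqref{eq:data-driven-SDE} exactly, your martingale-problem argument does not need this step, so drop it.

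Second, your constants do not close. With \eqref{eq:CDC-def} one has $\Gamma_{c\Delta_{\mathcal M}}(x^i,x^j)=2c\,\pi^{ij}$, so your limit generator is $c\,\Delta_{\mathcal M}+b\cdot\nabla_{\mathcal M}$. The generator of $Z_{ct}$ is $\tfrac c2\Delta_{\mathcal M}+c\,b\cdot\nabla_{\mathcal M}$. So you get $F(Z_t)$ when $c=\tfrac12$, not $F(Z_{ct})$ with the claimed time change $t\mapsto ct$. Likewise, as written $\rwgl_N\ge 0$ approximates a positive multiple of $-\Delta_{\mathcal M}$, so $\mathsf\Gamma$ from \eqref{eq:CDC-def} is negative semidefinite unless you flip the sign. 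Both defects are inherited from the statement; the paper's proof avoids them only by ignoring $b$, $c$ and the sign.
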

\begin{proof}
    See Appendix \ref{apdx:mosco-proof}.
\end{proof}
% \begin{remark}
% \end{remark}
%Theorem \ref{thrm:sampling-guarantees} guarantees that an Euler-Maruyama discretization of $\mathsf{X}_t$ remains geometrically faithful. 

\section{Numerical solution of IMDs}
\label{sec:discretization}
We propose the following Euler-Maruyama (E-M) scheme for the numerical solution of Eq. \eqref{eq:ambient-manifdold-SDE}
\begin{equation}\label{eq:data-driven-EM}
    \hbX_{(\ell+1)h} := \hbX_{\ell h} + \left(\rwgl(\hbX_{\ell h})+b(\hbX_{\ell h})\right)h + \mathsf{\Gamma}^{1/2}(\hbX_{\ell h})\sqrt{h}\xi_{\ell+1},
\end{equation}
where $\xi_{\ell+1}\sim\mathcal{N}(0,\mathbf{I}_n)$ is an independently drawn Gaussian, and $h>0$ is the step size such that $t\in[\ell h , (\ell+1)h)$ for some $\ell\in\mathbb N$. The following result demonstrates that Eq. \eqref{eq:data-driven-EM} is a consistent discretization in Wasserstein-2 distance that preserves the intrinsic geometric structure in the limit of large-sample size and infinitesimal step size $h$.
\begin{theorem}\label{thrm:sampling-guarantees}
    Let $Y_t$ be the diffusion process satisfying Eq. \eqref{eq:ambient-manifdold-SDE}, and $\hbX_{\ell h}$ given by Eq. \eqref{eq:data-driven-EM} with step size $h>0$. Assume that the coefficients $b,\mathsf{L}$ and $\mathsf \Gamma$ satisfy regularity conditions of Assumption \ref{ass:moment}. 
    
    Then, for any fixed $T>0$, and any $\eta >0$ and for all $t\in[\ell h , (\ell+1)h)$ with $h\ell \leq T$, we have 
    \begin{equation}
        W_2\left(\mathcal{L}(Y_t), \mathcal{L}(\hbX_{\ell h})\right) \leq \eta,
    \end{equation}
    as $N\to \infty, h\to 0, \epsilon \to 0$ under the assumed rates of Eq. \eqref{eq:eps-decay}  with probability $1-o_N(1)$.
\end{theorem}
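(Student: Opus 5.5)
The plan is to split the error through the triangle inequality for $W_2$, introducing two intermediate processes. The first is the continuous data-driven process $\mathsf{X}_t$ of Eq. \eqref{eq:data-driven-SDE}. The second is its Euler--Maruyama discretization $\bar{\mathsf{X}}_{\ell h}$ of Eq. \eqref{eq:data-driven-EM}, started from the same initial point $Y_0=\mathsf{X}_0=\bar{\mathsf{X}}_0\in F(\mathcal M)$. We then bound
\begin{equation*}
W_2\bigl(\mathcal L(Y_t),\mathcal L(\hbX_{\ell h})\bigr)\le W_2\bigl(\mathcal L(Y_t),\mathcal L(Y_{\ell h})\bigr)+W_2\bigl(\mathcal L(Y_{\ell h}),\mathcal L(\mathsf X_{\ell h})\bigr)+W_2\bigl(\mathcal L(\mathsf X_{\ell h}),\mathcal L(\hbX_{\ell h})\bigr)
\end{equation*}
and make each term at most $\eta/3$.

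\textbf{First term.} This is a time-regularity estimate for $Y$. Because $F(\mathcal M)$ is compact and the coefficients of Eq. \eqref{eq:ambient-manifdold-SDE} are bounded, the synchronous coupling together with It\^o's isometry gives $\mathbb E\|Y_t-Y_{\ell h}\|^2\le C(h^2+h)$. This tends to $0$ as $h\to0$.

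\textbf{Third term.} This is a standard strong Euler--Maruyama estimate for an SDE with fixed coefficients. We couple $\mathsf X$ and $\hbX$ through the same Brownian increments $\xi_{\ell+1}=(W_{(\ell+1)h}-W_{\ell h})/\sqrt h$. Assumption \ref{ass:moment} is presumed to provide Lipschitz bounds and second moments for $b$, $\mathsf L$ and $\mathsf\Gamma^{1/2}$, with constants $K_N$ that may depend on $N,\epsilon$. The usual Gr\"onwall argument on $e_\ell=\mathbb E\|\mathsf X_{\ell h}-\hbX_{\ell h}\|^2$ then gives $\sup_{\ell h\le T}e_\ell\le C(T,K_N)\,h$. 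Since $W_2^2$ is at most the second moment of any coupling, this term is $O(\sqrt{C(T,K_N)h})$. The rate coupling in Eq. \eqref{eq:eps-decay} is what lets $h\to0$ fast enough relative to the growth of $K_N$ as $\epsilon\to0$; for example, we could require $hK_N e^{K_N T}\to0$.

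\textbf{Second term, the main obstacle.} Theorem \ref{thrm:mosco-conv} gives convergence in law of $\mathsf X$ to $Y$ on $C([0,T],F(\mathcal M))$ on the high-probability event where the graph-Laplacian estimates hold; this event has probability $1-o_N(1)$ over the sampling of $X_N$. Applying the continuous evaluation map at time $\ell h$ yields weak convergence of the one-time marginals, uniformly over $\ell h\le T$ because path-space convergence controls all times at once. Weak convergence alone does not imply $W_2$ convergence, so we also need uniform integrability of $\|\mathsf X_{\ell h}\|^2$. I expect to get this either from the uniform moment bound in Assumption \ref{ass:moment} or from confinement of the processes to a bounded neighborhood of $F(\mathcal M)$. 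Weak convergence plus convergence of second moments is equivalent to $W_2$ convergence on $\mathbb R^n$, so the term tends to $0$.

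The delicate point is to make this uniform in $t\in[0,T]$ and compatible with the simultaneous limits of $N$, $h$ and $\epsilon$. I would handle it by first choosing $N$ large (and hence $\epsilon$ small) so that the second term is below $\eta/3$ on the good event. With $N$ fixed, I would then choose $h$ small enough to control the first and third terms. This ordering is what the rates in Eq. \eqref{eq:eps-decay} are meant to encode.
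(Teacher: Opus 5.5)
Your proposal is correct and follows essentially the paper's route: a three-term $W_2$ triangle inequality through the continuous data-driven process, with Theorem~\ref{thrm:mosco-conv} for the data-approximation term, synchronous coupling and It\^o's isometry for the $t$ versus $\ell h$ shift, and a classical strong Euler--Maruyama bound for the discretization term. The differences are minor: you apply the time-shift estimate to the limit process $Y$, whose coefficients are bounded on the compact $F(\mathcal M)$, rather than to the data-driven process, and you are more explicit than the paper about uniformity in $t$ and about upgrading weak convergence to $W_2$ convergence; note that Assumption~\ref{ass:moment} as stated already gives Lipschitz and growth constants independent of $N$, so your rate condition $hK_Ne^{K_NT}\to0$ is automatic, and that a uniform second-moment bound alone does not give uniform integrability, so you should rely on confinement to the compact set $F(\mathcal M)$ from Theorem~\ref{thrm:mosco-conv} (or on higher moments obtained from the uniform linear-growth bound).
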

\begin{proof}
    See Appendix \ref{apdx:sampling-proof}.
\end{proof}

%Together, Theorems \ref{thrm:mosco-conv} and \ref{thrm:sampling-guarantees} justify the data-driven construction on IMDs and their numerical solution in the limit $h\to 0$. 

%Numerical approximations of IMDs necessarily operate with finite step sizes, whereas the guarantees of Theorems \ref{thrm:mosco-conv} and \ref{thrm:sampling-guarantees} hold in the infinitesimal limit $h \to 0$.
\begin{wrapfigure}{r}{0.3\textwidth}
    \centering
    \vspace{-15pt}
    \begin{tikzpicture}[scale=0.7, >=latex]

        % -------------------------------------------------
        % styles
        % -------------------------------------------------
        \tikzset{
            manifold/.style={line width=1.5pt, gray!70},
            datapoint/.style={circle, fill=gray!45, inner sep=1.2pt},
            scorearrow/.style={->, line width=1pt, red!75},
            imdarrow/.style={->, line width=1pt, teal!75},
            botharrow/.style={->, line width=1pt, violet!80},
            lab/.style={font=},
            title/.style={font=\bfseries},
            box/.style={rounded corners=4pt, draw=black!15, fill=black!2}
        }

        % -------------------------------------------------
        % PANEL
        % -------------------------------------------------
        \begin{scope}[scale=1.0]
            %\node[title] at (0,3.1) {IMD + DRGD};

            % \draw[box] (-2.8,-2.2) rectangle (2.8,2.7);

            \draw[manifold, domain=-2.2:2.2, samples=200, smooth]
                plot (\x,{0.55*sin(1.4*\x r) + 0.18*\x});

            \foreach \x in {-1.8,-1.2,-0.5,0.2,0.9,1.6} {
                \fill[gray!50] (\x,{0.55*sin(1.4*\x r) + 0.18*\x}) circle (1.2pt);
            }

            % starting point on manifold
            \coordinate (x0) at (-1.0,{0.55*sin(1.4*(-1.0) r) + 0.18*(-1.0)});
            \fill[black] (x0) circle (1.5pt);
            \node[lab, below] at (x0) {\small $\hbX_\ell$};

            % off-manifold point after IMD step
            \coordinate (u1) at (0.7,0);
            \fill[black] (u1) circle (1.5pt);
            \node[lab, below right] at (u1) {\small $\mathsf{U}_{\ell+1}$};

            % retracted point on manifold
            \coordinate (x1) at (0.45,{0.55*sin(1.4*(0.45) r) + 0.18*(0.45)});
            \fill[black] (x1) circle (1.5pt);
            \node[lab, above left] at (x1) {\small $\hbX_{\ell+1}$};

            % IMD step
            \draw[imdarrow] (x0) -- (u1)
                node[midway, below=5pt, lab] {\small IMD};

            % score retraction
            \draw[scorearrow] (u1) -- (x1)
                node[midway, right=2pt, lab] { $\sigma^2s_\sigma$};
        \end{scope}
    \end{tikzpicture}
    \caption{We use the score of a pre-trained score model $s_\sigma(x) \approx \nabla_{x} \log p_\sigma(x)$ as a retraction toward the data manifold.}
    \label{fig:imd_score_decomposition}
    \vspace{12pt}
\end{wrapfigure}
\subsection{Numerical heuristics} \label{ssec:IMD-DRGD-discussion}
In our experiments, we found that the E-M discretization in Eq. \eqref{eq:data-driven-EM} typically led to bounded discretization errors and is a practically viable approach for simulating IMDs.
Nevertheless, we note that some applications may require large step sizes to improve numerical efficiency, which may lead to off-manifold deviations due to the accumulation of discretization error. 
As a numerical heuristic to control the error at large step sizes, E-M steps can be interleaved with a Denoising Riemannian Gradient Descent (DRGD) step \citep{kharitenko2025landingscoreriemannianoptimization} to retract off-manifold deviations. The DRGD step uses the learned score $s_\sigma$ at noise level $\sigma>0$ of a score-based diffusion model trained on the point cloud $X_N$ (Fig. \ref{fig:imd_score_decomposition}).
%may lead to off-manifold deviations due to discretization error, which can require very small step sizes to control. To alleviate this effect and allow tractable step sizes, we optionally apply a Denoising Riemannian Gradient Descent (DRGD) step \citep{kharitenko2025landingscoreriemannianoptimization} after each E-M update (see Fig. \ref{fig:imd_score_decomposition}).  Importantly, this correction is not part of the theoretical construction of IMDs, which are entirely defined by $\mathsf L$ and $\mathsf \Gamma$. Rather, the DRGD update serves as a practical stabilization mechanism and enables the use of larger step sizes. 
See Algorithm \ref{alg:IMD-with-drgd} for the numerical implementation and Appendix \ref{apdx:implementation-details} for more details.
\begin{algorithm}[h!]
    \caption{Implicit manifold-valued diffusions (with DRGD)}\label{alg:IMD-with-drgd}
    \KwInput{Point cloud $X_N=\{x_i\}_{i=1}^N$, step size $h>0$, iterations $L>0$, (optional drift $b$)}
    $s_\sigma(\cdot), \sigma \leftarrow$ from trained score model on $X_N$\hfill \COMMENT{(optional)}\\
    $\mathsf{L}, \mathsf{\Gamma} \leftarrow$ from Section \ref{sec:data-driven-SDE} \\
    \
    $\hbX_0 \leftarrow X_N[0]$ \hfill \COMMENT{Initialize on $X_N$}\\
    \For{$\ell=0, ..., L-1$}{
        Draw $\xi_{\ell+1}\sim\mathcal{N}(0,\mathbf{I}_{n})$ \\
        $\mathsf{U}_{\ell+1} = \hbX_\ell + \left[\mathsf{L}(\hbX_\ell) + b(\hbX_\ell)\right]h + \mathsf{\Gamma}^{1/2}(\hbX_\ell)\sqrt{h}\xi_{\ell+1}$ \hfill \COMMENT{E-M step}\\
        $\hbX_{\ell+1} = \mathsf{U}_{\ell+1} + \sigma^2 s_\sigma(\mathsf{U}_{\ell+1})$ \hfill \COMMENT{(optional) DRGD stabilization}\\
    }
    \KwResult{Trajectory $\{\hbX_i\}_{i=0}^L$}
\end{algorithm}
%\newpage
\vspace{-15pt}
\section{Experiments}\label{sec:experiments}
We evaluate IMDs on both synthetic and real-world data to highlight three key properties:
\begin{enumerate}[label=(\alph*)]
    \item \emph{Geometric fidelity.} IMDs follow the manifold for small time steps with bounded errors. \label{item:geom-fidelity}
    \item \emph{Statistical fidelity.} The law of the generated process produces statistically meaningful endpoints, allowing for sampling of probability distributions supported on $
    \mathcal M$.\label{item:stat-fidelity}
    \item \emph{On-manifold exploration.} IMDs yield tangential dynamics on the manifold, 
    %whereas score-based generative models yield dynamics in normal directions. 
    permitting smooth guided exploration between source-target pairs over the implicit manifold.\label{item:generalization}
\end{enumerate}
We demonstrate points \ref{item:geom-fidelity} and \ref{item:stat-fidelity} through synthetic experiments on the $d$-sphere $\mathbb S^d$, and point \ref{item:generalization} on a real-world image dataset. Further experimental details on the datasets (\ref{apdx:ssec:datadets}), the score model used for the DRDG step (\ref{ssec:more-on-SGM}) and the specific parameters of the E-M steps (\ref{apdx:ssec:exp_details}) are in the Appendix \ref{apdx:implementation-details}.
%We refer the reader to Appendix \ref{apdx:implementation-details} for details on the experimental setups. 

\subsection{Sampling on the hypersphere}\label{ssec:Langevin}
%In this section we evaluate IMDs against the analytically well-known stochastic process of Brownian motion on the $d$-sphere $\mathbb{S}^d$. 
\paragraph{Spherical Brownian motion} To quantify the geometric fidelity \ref{item:geom-fidelity}  of IMDs, we simulated Eq. \eqref{eq:data-driven-EM} on the hypersphere and computed the distance to manifold 
\begin{equation}
    \delta_t \in \vert\hbX_t - \pi_{\mathbb{S}^d}(\hbX_t)\vert, 
\end{equation}
where $\pi_{\mathbb{S}^d}$ is a projection onto $\mathbb{S}^d \subset \mathbb R^{d+1}$. We simulate trajectories for $d\in\{2,7\}$ and vary the step size $h\in\{10^{-3}, 10^{-4}, 2\times 10^{-5}\}$. We find that IMDs remain faithful to the manifold for long simulation times, as shown in Fig. \ref{fig:ballballball}a and quantified in Table \ref{tab:geom-fide}. Adding interleaved DRGD steps further improves fidelity, but they are not needed to constrain the dynamics on the manifold for small step sizes (Table \ref{tab:geom-fide}). By comparison, we conducted ablation experiments, where either the Laplacian term in Eq. \eqref{eq:data-driven-EM} was removed (CDC only, Fig. \ref{fig:ballballball}b), or used isotropic ambient noise instead of CDC-projected noise combined with DRGD retraction (Fig. \ref{fig:ballballball}c) obtaining worse manifold fidelity (Table \ref{fig:methods-comparison}, Appendix \ref{apdx:comparison-apdx}).

\begin{table}[h!]
\centering
\small
\setlength{\tabcolsep}{5pt}
\renewcommand{\arraystretch}{1.15}
\begin{tabular}{c c c c l}
\toprule
\textbf{Dimension} $d$ & \textbf{Step size} $h$ & \textbf{IMD} ($\mathbb{E}_t[\delta_t]$ / $\max_t\delta_t$) & \textbf{IMD+DRGD} ($\mathbb{E}_t[{\delta}_t]$ / $\max_t\delta_t$)  \\
\midrule
2 & $10^{-3}$ & $0.005 \pm 0.002 \,/\, 0.014 \pm 0.004$ & $0.002 \pm 0.001 \,/\, 0.004 \pm 0.001$  \\
\midrule
7 & $10^{-3}$ & $0.130 \pm 0.048 \,/\, 0.310 \pm 0.091$ & $0.003 \pm 0.001 \,/\, 0.013 \pm 0.002$ \\
7 & $10^{-4}$ & $0.104 \pm 0.050 \,/\, 0.273 \pm 0.092$ & $0.003 \pm 0.001 \,/\, 0.007 \pm 0.001$ \\
7 & $2\times10^{-5}$ & $0.096 \pm 0.043 \,/\, 0.262 \pm 0.082$ & $0.004 \pm 0.000 \,/\, 0.006 \pm 0.001$ \\
\bottomrule
\end{tabular}
\vspace{10pt}
\caption{\textbf{Geometric fidelity of IMD Brownian Motion on spheres of varying dimensions.} The optional DRGD retraction step of Algorithm \ref{alg:IMD-with-drgd} helps control $\delta_t$ as step size increases, supporting our heuristic from Section \ref{ssec:IMD-DRGD-discussion}. Cells report $\mathbb{E}_t[{\delta}]_t \,/\, \max_t \delta_t$, each as mean $\pm$ one standard deviation over $30$ independent runs, with time horizon $T=2$. See Appendix for details \ref{apdx:ssc-orthogonal-group}.}
\label{tab:geom-fide}
\vspace{-10pt}
\end{table}

\begin{figure}[h!]
\centering
\begin{minipage}{0.3\textwidth}
    \centering
    \includegraphics[width=\linewidth]{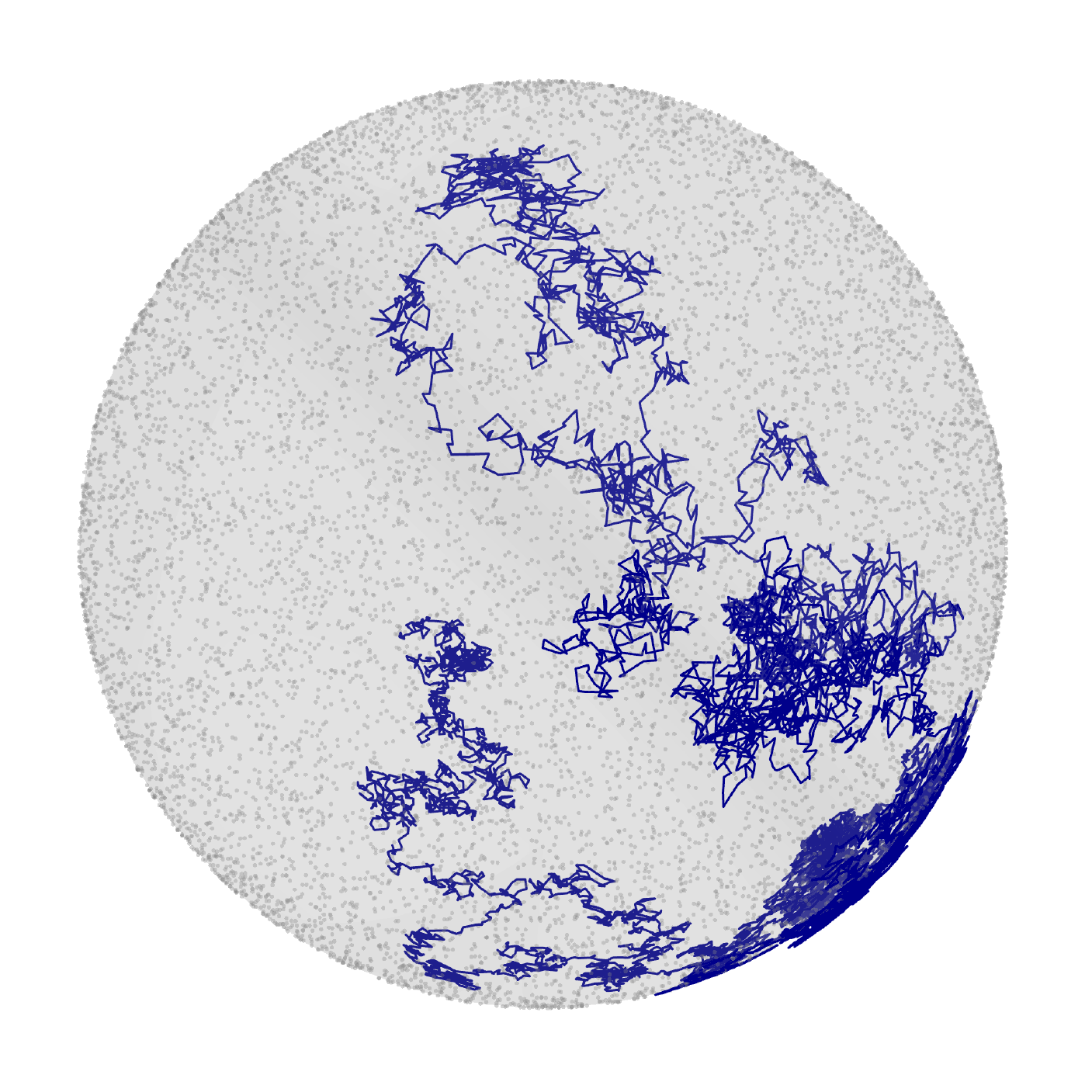}
    \subcaption{IMD}
\end{minipage}
\hfill
\begin{minipage}{0.3\textwidth}
    \centering
    \includegraphics[width=\linewidth]{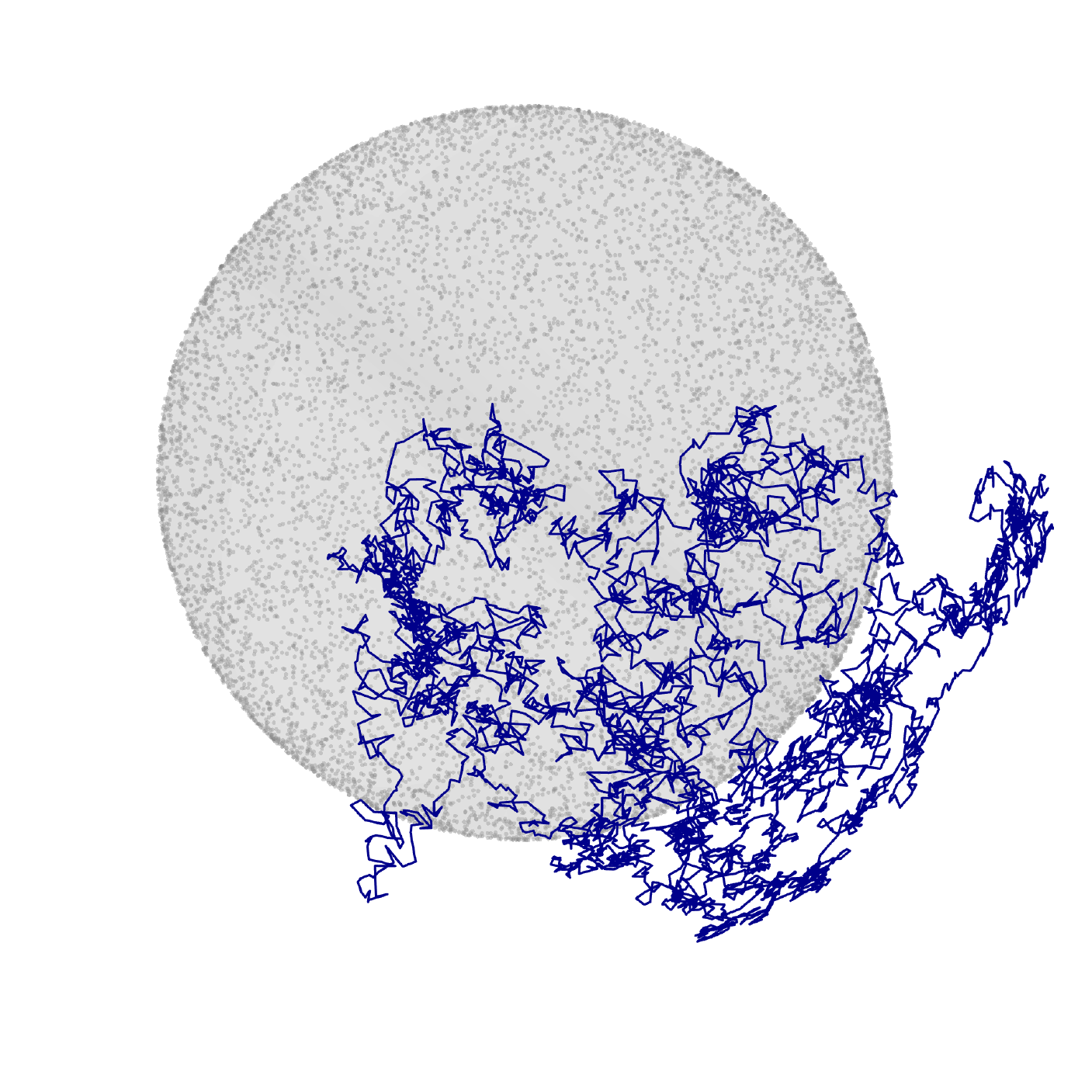}
    \subcaption{CDC only}
\end{minipage}
\hfill
\begin{minipage}{0.3\textwidth}
    \centering
    \includegraphics[width=\linewidth]{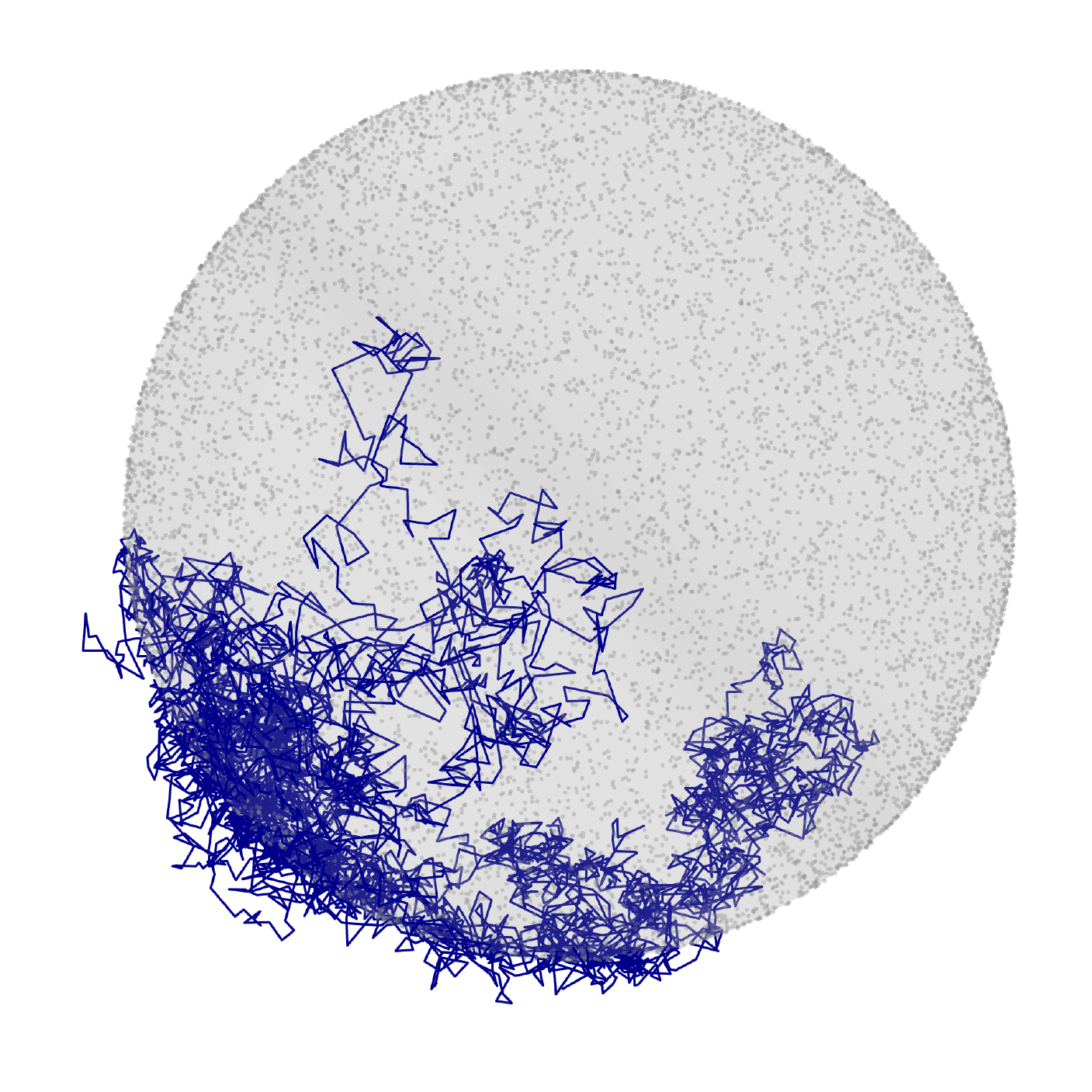}
    \subcaption{$\mathbb R^3$ Noise + DRGD Retraction}
\end{minipage}
%\begin{minipage}{0.23\textwidth}
    \caption{\textbf{Visual comparison of IMDs against  heuristic approaches.} (a) IMDs remain on $\mathbb S^2$. (b) CDC-projected noise without Laplacian drift correction accumulates errors. (c) Isotropic noise with DRGD retraction follows the manifold, but not accurately.}
    \label{fig:ballballball}
%\end{minipage}
\vspace{-10pt}
\end{figure}
%\vspace{-15pt}
\paragraph{Langevin dynamics}
On a manifold $\mathcal M$, Langevin dynamics with equilibrium density $q \propto e^{-U(x)}$, where $U:\mathcal{M}\to\mathbb R$ is a potential, satisfy the SDE \citep{chewi25logconcave, bakry2013analysis}
\begin{equation}\label{eq:manifold-langevin}
    %dY_t = -\nabla U(Y_t)dt + \sqrt{2}dB_t \quad \textnormal{resp.} \quad 
    dY_t = -\nabla_\mathcal{M} U(Y_t)dt + \sqrt{2}dB^\mathcal{M}_t.
\end{equation}
Following the definition of the Riemannian gradient as a tangent space projection \citep{boumal2023intromanifolds, kressner_low_nodate} and Proposition \ref{prop:CDC-projector}, we have $\nabla \tilde{U}(x) = \Gamma(U(x))$, yielding the following extrinsic SDE   
\begin{equation}\label{eq:IMD-langevin}
    dY_t = \left( \mathsf{L}(Y_t) - \mathsf \Gamma(\nabla U(Y_t)) \right)dt + \sqrt{2}\mathsf{\Gamma}^{1/2}(Y_t)dW_t.
\end{equation}
Using IMD Langevin dynamics \eqref{eq:IMD-langevin} we sample from the von Mises-Fisher (vMF) distribution on  $\mathbb{S}^d$
\begin{equation}\label{eq:vMF-distr}
    p(\mathbf{x},\boldsymbol{\mu}, \kappa) \propto \exp\left(\kappa\boldsymbol{\mu}^\top\mathbf{x}\right),
\end{equation}
where $\boldsymbol{\mu}\in\mathbb R^n$ is the mean vector and  $\kappa>0$ is the concentration parameter \citep{mardia2009directional}. As per Eq. \eqref{eq:IMD-langevin}, we thus have $\nabla \tilde{U}(x) = -\kappa\boldsymbol{\mu}$. For fixed $\boldsymbol{\mu}$, the stationary solution of the Langevin equation \eqref{eq:manifold-langevin} on the hypersphere $\mathbb{S}^{d}$ is analytically given by the sufficient statistic \(\theta := \boldsymbol{\mu}^\top Y_T\), which follows the law 
\begin{equation}\label{eq:vMF-stat-law}
    p(\theta) = \kappa^{\frac{d-1}{2}}\left(\sqrt{\pi}\Gamma\left(\frac{d}{2}\right)I_{\frac{d-1}{2}}(\kappa)\right)^{-1}e^{\kappa \theta}(1-\theta^2)^{\frac{d-2}{2}},
\end{equation}
where $\Gamma(\cdot)$ is the Gamma function, and $I_d$ is the modified Bessel function \citep{mardia2009directional}.
Thus, to measure statistical fidelity \ref{item:stat-fidelity}, we may simulate probability paths of IMDs with DRGD (see Section \ref{ssec:more-on-SGM} for details) and compute an empirical distribution of $\theta$ at end points $T$. 
The results are shown in Fig. \ref{fig:Langevin-test} for $\mathbb S^7 \subset \mathbb R^8$, projected onto the first-coordinate basis vector $\boldsymbol{\mu} := e_1$, indicating that the simulated process recovers the target equilibrium law.

\begin{figure}[h!]
    \centering

    \begin{minipage}[c]{0.5\textwidth}
        \centering
        \includegraphics[width=\linewidth]{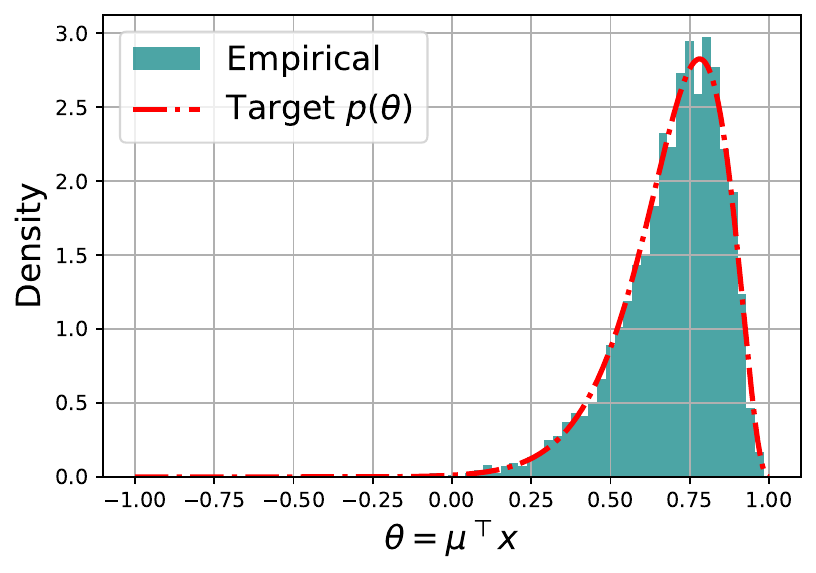}
    \end{minipage}
    \hfill
    \begin{minipage}[c]{0.40\textwidth}
        \caption{
            \textbf{Langevin sampling on $\mathbb S^7$.} Histogram of the endpoint statistic $\theta~=~\langle\boldsymbol{\mu},Y_T\rangle \in [-1,1]$ (teal) under Langevin dynamics computed with IMDs, compared with the theoretical density induced by the von Mises-Fisher distribution from Eq. \eqref{eq:vMF-stat-law} (red) on $\mathbb{S}^7\subset \mathbb R^8$. Parameters used: $\kappa=10$, $h=10^{-4}$.
        }
        \label{fig:Langevin-test}
    \end{minipage}
\vspace{-15pt}
\end{figure}

\subsection{Guided on-manifold exploration of image data}\label{ssec:MNIST-exp}
In real-world data, IMDs allow for stochastic exploration of implicit manifolds between a starting point and a target. 
To demonstrate this, we construct manifold-constrained Langevin dynamics 
over the MNIST dataset \citep{lecun2002gradient}. 
Specifically, we trained an autoencoder $E:\mathcal X\to\mathcal Z$ from pixel space $\mathcal X$ to latent space $\mathcal Z$, and will consider Langevin dynamics \eqref{eq:manifold-langevin} on the latent manifold $\mathcal Z$. 
For illustration, we then define a parabolic potential centered around $Z_\star\in\mathcal Z$, a latent corresponding to a data point:
\begin{equation}\label{eq:mnist-potential}
    U(Z_t) := \frac{\lambda}{2}\|Z_t-Z_\star\|_2^2, \quad \lambda>0 \textnormal{ a strength coefficient}.
\end{equation}
As a benchmark, we consider $\mathbb R^n$-valued Langevin dynamics, corrected with a DRGD retraction:
\begin{IEEEeqnarray}{rCl}\label{eq:DRGD-langevin}
    dY_t & = &  \textnormal{DRGD}_\sigma\bigl(-\nabla U(Y_t)dt + dW_t\bigr).
\end{IEEEeqnarray}
Note that by contrast to Eq. \eqref{eq:data-driven-SDE}, Eq. \eqref{eq:DRGD-langevin} lacks the Laplacian drift term and the CDC-based noise and gradient projection, replacing them by a DRGD retraction ($\textnormal{DRGD}_\sigma$,  Algorithm \ref{alg:IMD-with-drgd}, $\sigma>0$) and standard $\mathbb R^n$-valued  Brownian motion $W_t$. 
As a visual illustration, we transform the initial condition at digit `one' ($Z_0$) into the digit `seven' ($Z_\star)$, the minimum of the potential $U$. 
Fig. \ref{fig:MNIST-interpolation} confirms that IMD exploration yields a visually smooth transition between source and target image: first deforming the digit `one' within its own class, moving to a `seven' with no cross-bar,  and then adding a cross-bar to match the target (Fig. \ref{fig:MNIST-interpolation}a). 
By contrast, DRGD-aided Langevin exploration \eqref{eq:DRGD-langevin} yields an incoherent and discontinuous transition between source and target (Fig. \ref{fig:MNIST-interpolation}b,c).
Thus, tangential stochastic dynamics induced by IMDs provide a more locally coherent mode of exploration than solely score-driven dynamics, which do not guarantee on-manifold exploration.

%Visual comparisons supporting on-manifold dynamics \ref{item:generalization} are displayed in Fig. \ref{fig:MNIST-interpolation}, where we chose a perceptually meaningful example of . 
%We observe that the 
%Intuitively, one would expect a ``smooth'' trajectory to morph the starting ``1'' into a ``7 with no bar'', before then adding the bar. %Figure \ref{fig:MNIST-bent-manifold} schematically further illustrates this interpretation: IMDs evolve along the tangential geometry of the learned data manifold, whereas DRGD acts as a normal score-based correction toward the target region.

\definecolor{zoomteal}{RGB}{76,175,155}
\colorlet{framegray}{black!55}
\usetikzlibrary{calc,arrows.meta,fit}
\begin{figure}[hbtp]
\centering
\resizebox{\linewidth}{!}{%
\begin{tikzpicture}[
    font=\small,
    groupframe/.style={
        draw=framegray,
        line width=1.2pt,
        rounded corners=10pt,
        dash pattern=on 2.5pt off 2.5pt,
        inner xsep=0.28cm,
        inner ysep=0.22cm
    },
    zoomgroupframe/.style={
        draw=red,
        line width=1.4pt,
        rounded corners=10pt,
        dash pattern=on 2.5pt off 2.5pt,
        inner xsep=0.28cm,
        inner ysep=0.22cm
    },
    focusframe/.style={
        draw=red,
        line width=2pt,
        rounded corners=6pt,
        %dash pattern=on 2.5pt off 2.5pt
    },
    zoomarrow/.style={
        draw=red,
        line width=2.4pt,
        -{Latex[length=6mm,width=5mm]}
    },
    subcap/.style={
        font=\normalsize,
        align=center,
        text=black
    }
]

% ------------------------------------------------------------
% Parameters
% ------------------------------------------------------------
\def\FullW{18.5}
\def\ZoomW{18.5}
\def\RowGap{1.45}

% Vertical placement of the 3 double-row blocks
\def\TopY{0}
\def\MidY{-3.80}
\def\BotY{-7.55}

% ------------------------------------------------------------
% Top double row
% ------------------------------------------------------------
\node[anchor=north west, inner sep=0pt] (S0) at (0,\TopY)
  {\includegraphics[width=\FullW cm]{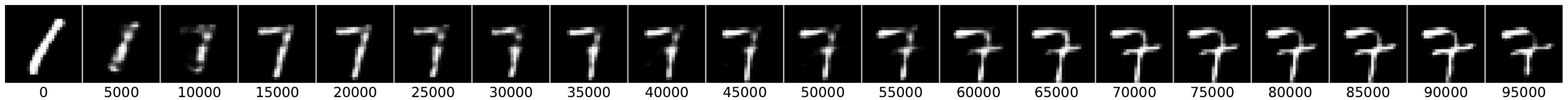}};

\node[anchor=north west, inner sep=0pt] (S1) at (0,{\TopY-\RowGap})
  {\includegraphics[width=\FullW cm]{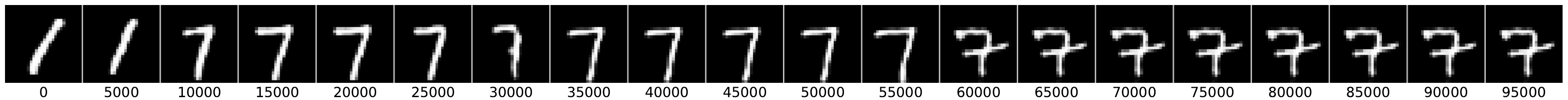}};

\node[groupframe, fit=(S0)(S1)] (G1) {};

% ------------------------------------------------------------
% Middle double row
% ------------------------------------------------------------
\node[anchor=north west, inner sep=0pt] (S2) at (0,\MidY)
  {\includegraphics[width=\FullW cm]{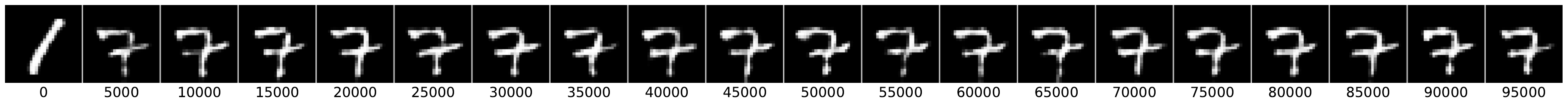}};

\node[anchor=north west, inner sep=0pt] (S3) at (0,{\MidY-\RowGap})
  {\includegraphics[width=\FullW cm]{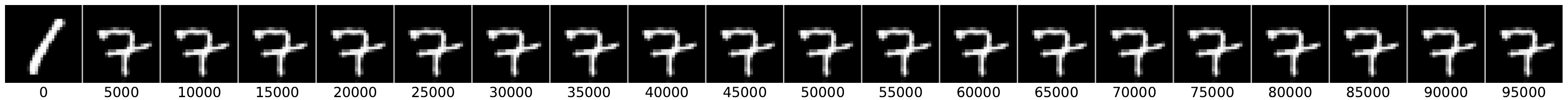}};

\node[groupframe, fit=(S2)(S3)] (G2) {};

% ------------------------------------------------------------
% Bottom zoomed double row
% ------------------------------------------------------------
\node[anchor=north west, inner sep=0pt] (Z0) at (0,\BotY)
  {\includegraphics[width=\ZoomW cm]{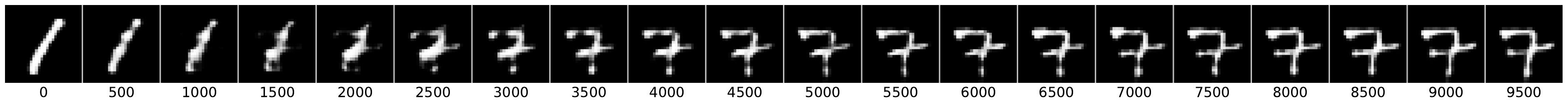}};

\node[anchor=north west, inner sep=0pt] (Z1) at (0,{\BotY-\RowGap})
  {\includegraphics[width=\ZoomW cm]{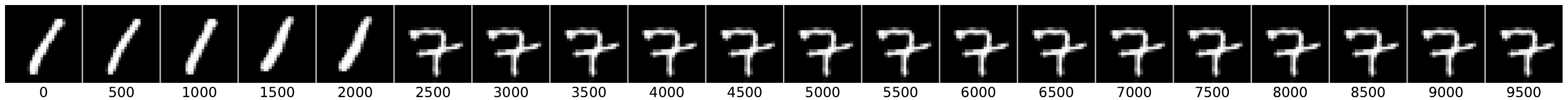}};

\node[zoomgroupframe, fit=(Z0)(Z1)] (G3) {};

% ------------------------------------------------------------
% Small teal zoom box on the middle top strip
% ------------------------------------------------------------
\coordinate (srcNW) at ($(S2.north west)+(0.03,-0.03)$);
\coordinate (srcSE) at ($(S2.north west)+({0.152*\FullW},-1.2)$);

\draw[focusframe] (srcNW) rectangle (srcSE);

% ------------------------------------------------------------
% Bent teal arrow
% ------------------------------------------------------------
\draw[zoomarrow]
  ($(srcNW)+(-0.10,-0.25)$)
    to[out=205,in=165,looseness=1]
  ($(G3.north west)+(-0.12,-0.10)$);

% ------------------------------------------------------------
% Subcaptions
% ------------------------------------------------------------
% Caption between first and second double-row blocks
\node[subcap] at ($(G1.south)!0.5!(G2.north)$)
  {(a) IMD trajectory (top) and associated nearest MNIST data point (bottom).};

% Caption between second and third double-row blocks
\node[subcap] at ($(G2.south)!0.5!(G3.north)$)
  {(b) DRGD trajectory (top) and associated nearest MNIST data point (bottom).};

% Optional caption below the bottom zoomed block
\node[subcap, anchor=north] at ($(G3.south)$)
  {(c) Zoomed-in DRGD trajectory from step 0 to $10'000$.};

\end{tikzpicture}%
}
\caption{\textbf{Potential-guided implicit manifold exploration.}
(a) IMD Langevin sampling \eqref{eq:IMD-langevin} starting from the digit `one' and driven by a potential with a minimum at the digit `seven'. The nearest data point at a particular time step is shown for visual guidance of the transition. The trajectory evolves continuously on the manifold, generating coherent transformation and consistent transitions.
(b) Standard Langevin dynamics with the same potential, retracted via DRGD.
(c) Zoomed in trajectory showing incoherent dynamics blending source and target early in the dynamics and rapid, discontinuous transition to the target.
}
%Comparison between  and a DRGD ambient Langevin sampling \eqref{eq:DRGD-langevin}, and their decoded nearest neighbours in $\mathcal{Z}$. The IMD trajectory (a) evolves more gradually along the latent data manifold, producing intermediate decoded images and nearest-neighbour data points that remain visually coherent throughout most of the transition. In contrast, the DRGD-corrected Euclidean trajectory (b) rapidly collapses toward the target region, producing transient source-target blends during the early part of the path. The zoomed-in DRGD trajectory (c) highlights this short-timescale collapse.  }
\label{fig:MNIST-interpolation}
\vspace{-10pt}
\end{figure}
\section{Discussion}\label{ssec:MNIST-discussion}
We introduced Implicit Manifold-valued Diffusions, a data-driven methodology for simulating diffusion processes on manifolds observed only through point clouds. We proved convergence of the induced process to its smooth manifold counterpart and demonstrated Langevin-type simulations on both synthetic and real-world data. The experiments indicate that tangential stochastic dynamics enables coherent on-manifold exploration beyond score-only retraction baselines, while the conjectured tangent-normal tradeoff points toward a broader theory of manifold-aware generative dynamics.

The experiments in Section \ref{ssec:Langevin} and \ref{ssec:MNIST-exp} corroborate the theoretical picture: IMDs recover target Langevin laws on spheres, remain geometrically stable at small step sizes, and produce coherent MNIST latent exploration.
%without requiring an additional score-based retraction in the exploration experiment. 
Yet, Table \ref{tab:geom-fide} also evidences the potential dimension dependence of the step size, $h(d)$, meaning that as $d$ increases, a smaller $h$ or DRGD-type correction is required to maintain geometric fidelity.
We formalize this intuition through the following conjecture.

%Revisiting the sphere experiments (Table \ref{tab:geom-fide}), we notice a tradeoff in the dimensionality $d$ and the step size $h$: higher dimensions seem to require smaller step sizes for accurate trajectories. We posit that dimension plays a role in accentuating curvature effects, which we present in the following conjecture.

%\vspace{-10pt}

\begin{conjecture}\label{conj:curvature}
We conjecture that the step size to dimension tradeoff is caused by dimension-dependent accumulation of normal errors. For a tangent Gaussian step \(\sqrt h \xi\in T_x\mathcal M\), the embedded exponential map satisfies
\begin{equation}\label{eq:exp-decomp}
    \exp_x(\sqrt{h}\xi) = x + \sqrt{h}\xi + \frac{h}{2}\Pi_x(\xi,\xi) + \mathcal{O}\left(\|\sqrt{h}\xi\|^3\right),
\end{equation}
where \(\Pi_x\) is the second fundamental form \citep{zhan2026riemannianlangevindynamicsstrong}.
Thus, finite-step normal error is controlled by $\Pi_x(\xi,\xi)$, which for isotropic Gaussians $\xi$ scales with contractions of the curvature over the $d$ tangent directions; this yields an $\mathcal{O}\left(dh\kappa_\mathcal{M}\right)$ normal scale \citep{do1992riemannian, zhan2026riemannianlangevindynamicsstrong}, where $\kappa_\mathcal{M}$ is a curvature-dependent term. 
Hence, finite-step geometric fidelity may depend on dimension, curvature and step size. See Appendix \ref{apdx:conjecture-support} for further details.
\end{conjecture} 

%\vspace{-10pt}
\paragraph{Addressing the curse of dimensionality} The computation of the graph Laplacian relies on a kernel $k_\epsilon$, which inherently suffers from the curse of dimensionality \citep{vapnik2013nature}. Moreover, the repeated matrix square-root computation at each time step makes IMDs prone to scalability issues for higher-dimensional problems. We however believe that an efficient computation of the Laplace-Beltrami and CDC operators is achievable through neural surrogates, as displayed in recent work \citep{bamberger2026riemannian}.

%\vspace{-10pt}
\paragraph{Combining IMDs and score-based models for generative modeling} Beyond efficient numerical schemes, the interplay of tangent (IMD) and normal directions (DRGD) could further our current understanding of the generative dynamics of score-based models in the low-noise regime. 
Indeed, studying the relationship between $\nabla\log p_\sigma$, $L$, and $\Gamma$ is currently an emerging line of research, blending geometry and stochastic dynamics \citep{li2025scores, pidstrigach2022score}. %We also believe that it holds great practical potential, in investigating local flatness of data manifolds, and optimizing the balance between IMD and score-based contributions.
%\paragraph{Generative modeling}  
%Beyond theoretical understanding of diffusion models, we see a space for new generative architectures. 
Our results suggest that combining both IMDs and DRGD 
%to consider noisy data 
opens the door to implicit manifold-constrained generative models. Furthermore, extending IMDs to endpoint-conditioned processes (e.g. via Doob-$h$ transforms \citep{corstanje2026simulating, pidstrigach2025conditioning} or Schrödinger bridges \citep{de2021diffusion}) may offer computationally inexpensive and stable alternatives for exploring data manifolds.

%\vspace{-10pt}
\paragraph{Impact statement} This work aims to advance the theoretical foundations of the rapidly growing field of generative modeling. The main societal risks are therefore indirect and inherited from generative modeling more broadly -- such as misuse, biased data-driven outputs, or overconfident usage in sensitive scientific pipelines -- rather than specific to the proposed mathematical construction.
\paragraph{Code availability}The code will be released upon acceptance of this manuscript. We nonetheless provide access to all used parameters and methods in both the main text and appendices.  

\begin{ack}
The authors would like to thank Antonio Terpin and Friso de Kruiff for their valuable feedback on the manuscript, as well as Drew Tyler for helping us with last-minute technical issues. 

P.V. and V.K.B. acknowledge funding from the Swiss National Science Foundation (SNSF), grant number SNF 200021-232277. A.G. and C.G. were supported by an ERC grant (NEURO-FUSE, Project DOI: 10.3030/101163046).
\end{ack}

\newpage
\bibliography{refs}
\bibliographystyle{plainnat}
\newpage
\appendix
\section{Aesthetically Pleasing Plots}

\begin{figure}[htbp]
\centering
\begin{minipage}[t]{0.30\textwidth}
    \centering
    \includegraphics[width=\linewidth]{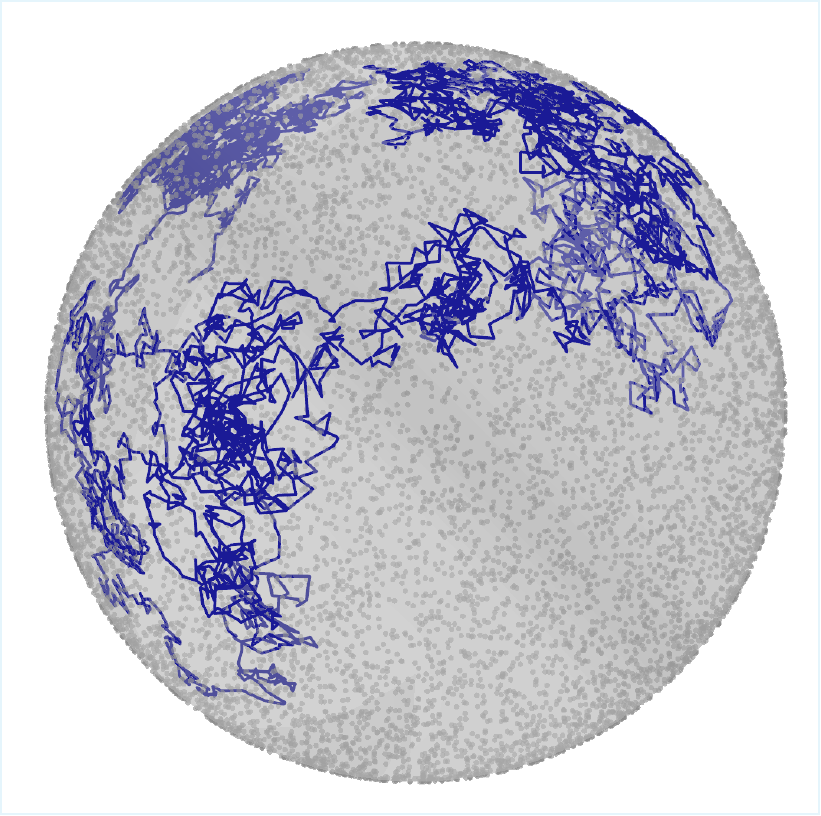}
    \subcaption{2-dimensional sphere}
    \label{fig:S2-BM}
\end{minipage}
\hfill
\begin{minipage}[t]{0.30\textwidth}
    \centering
    \includegraphics[width=\linewidth]{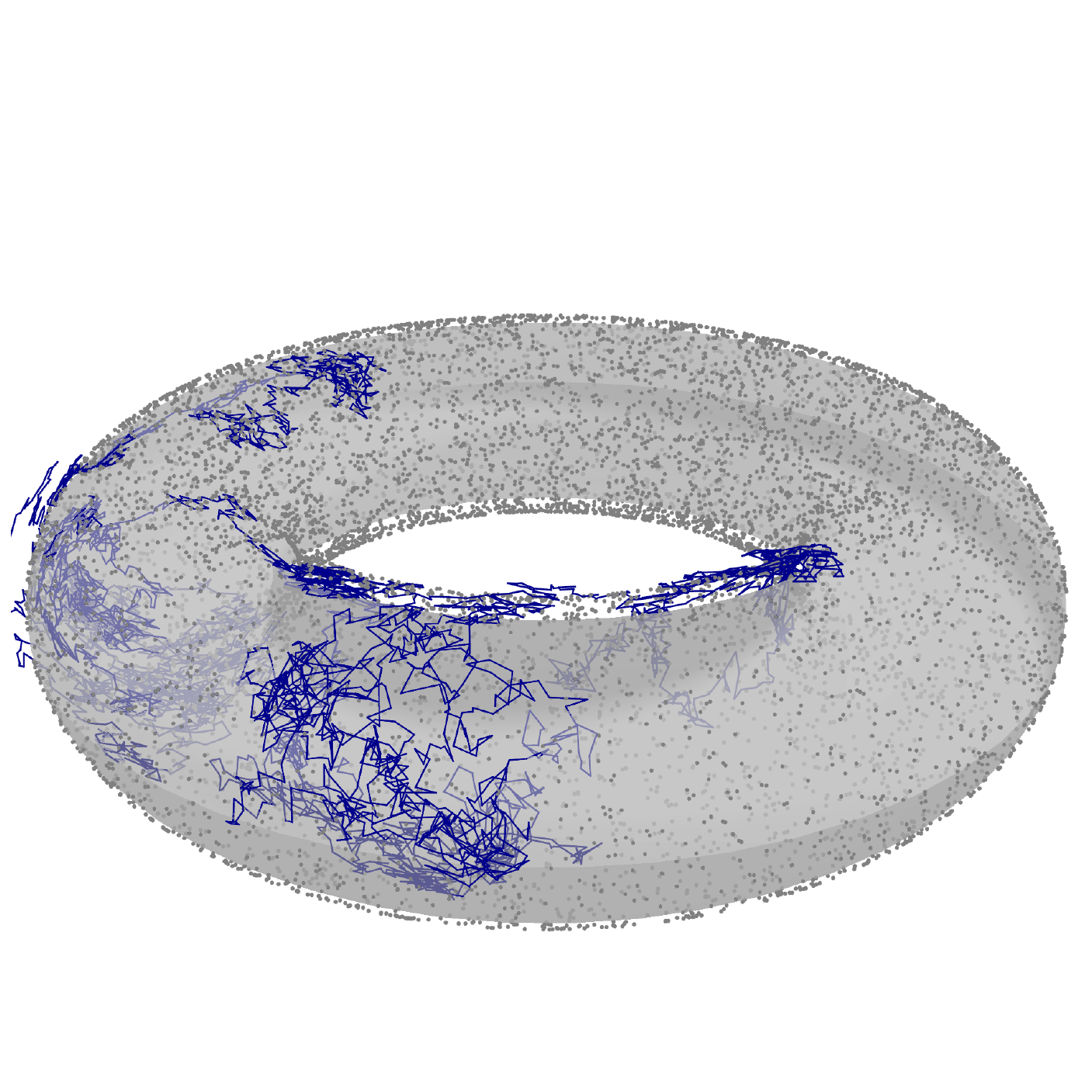}
    \subcaption{The Torus $\mathbb{T}^2$}
    \label{fig:torus-2}
\end{minipage}
\hfill
\begin{minipage}[t]{0.30\textwidth}
    \centering
    \includegraphics[width=\linewidth]{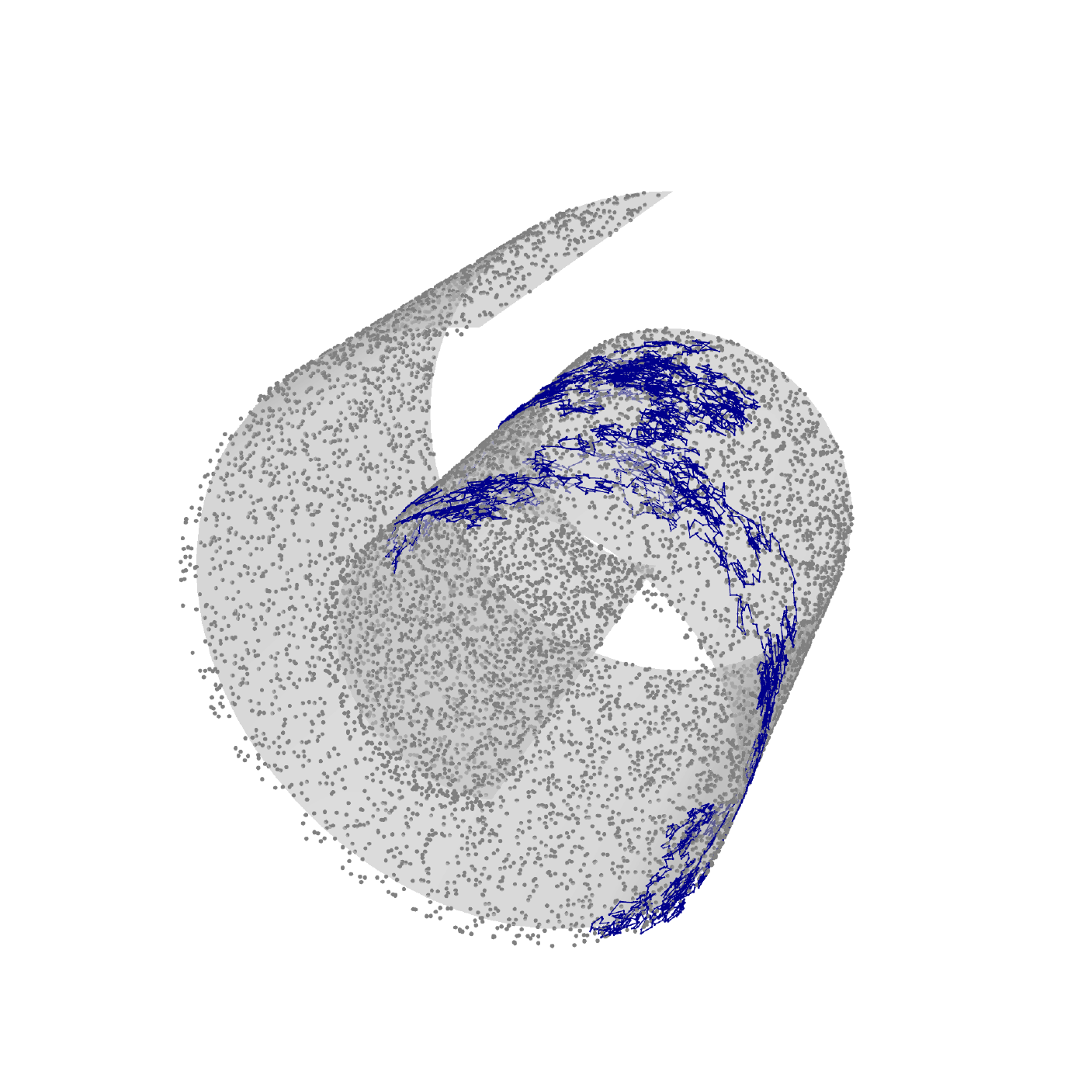}
    \subcaption{Swiss Roll}
    \label{fig:SR-BM}
\end{minipage}

\caption{IMD implementations in 3-dimensional space, with size is $10^{-3}$. We notice that while the Swiss Roll is not boundaryless, the nearest-neighbour approach to estimating $\mathsf{L}(X_t)$ via $\discrop^*_N$ prevents off-manifold drift.}
\label{fig:further-geometries}
\end{figure}

\section{Some Extrinsic Riemannian Geometry}\label{apdx:riem-geom}

We consider the case of a compact boundaryless Riemannian manifold $(\mathcal{M}, g)$ and an isometric embedding $F:\mathcal{M}\to \mathbb{R}^n$ and will study some properties of the embedded manifold $F(\mathcal{M})$. 
%\todo{explain a bit vector fields and differentials.}
We begin with some notation: let $(e_1, \dots e_n)$ denote the canonical basis of the ambient space $\mathbb{R}^n$, $(x^1, \dots x^n)$ denote an arbitrary coordinate space in $\mathbb{R}^n$, and $f\in\mathcal{C}^\infty(\mathcal{M})$ with an extension $\Tilde{f}\in \mathcal{C}^\infty(\mathbb{R}^n)$ such that $f = \Tilde{f}$ on $F(\mathcal{M})$. 
\begin{proposition}[\cite{kressner_low_nodate}]\label{prop:riemannian-gradient-proj}
    Let $\nabla \Tilde{f}$ denote the usual gradient of $\Tilde{f}$, and $\nabla_\mathcal{M}f$ the Riemannian gradient of $f$. Then for a point $x\in F(\mathcal{M})$, we have
    \begin{equation}
        \nabla \Tilde{f}(x) = P_{T_x\mathcal{M}}\nabla \Tilde{f}(x)+P^\perp_{T_x\mathcal{M}}\nabla \Tilde{f}(x).
    \end{equation}    
    In particular, this implies that the Riemannian gradient of $f$ at $x$ is the projection of $f$'s extension onto the tangent space at $x$: $\nabla_\mathcal{M}f(x) = P_{T_x\mathcal{M}}\nabla\Tilde{f}(x)$.
\end{proposition}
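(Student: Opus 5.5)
The decomposition $\nabla \tilde f(x) = P_{T_x\mathcal{M}}\nabla\tilde f(x) + P^\perp_{T_x\mathcal{M}}\nabla\tilde f(x)$ is immediate: $T_xF(\mathcal{M})$, identified with $\dd F_z(T_z\mathcal{M})$ where $x = F(z)$, is a linear subspace of $\mathbb{R}^n$. So $\mathbb{R}^n = T_x\mathcal{M}\oplus (T_x\mathcal{M})^\perp$, and $P + P^\perp = \mathrm{Id}$ for the orthogonal projector $P = P_{T_x\mathcal{M}}$. The substance is the second claim, $\nabla_\mathcal{M} f(x) = P\nabla\tilde f(x)$, understood via the identification of $\nabla_\mathcal{M}f(z)\in T_z\mathcal{M}$ with $\dd F_z(\nabla_\mathcal{M}f(z))\in\mathbb{R}^n$. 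I will prove it by showing that both vectors lie in $T_x\mathcal{M}$ and have the same inner product with every tangent vector.

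\textbf{Step 1.} I first characterize the Riemannian gradient. Starting from Eq. \eqref{eq:riemannian-gradient}, for every $V\in T_z\mathcal{M}$ we have
\[
\langle \nabla_\mathcal{M} f, V\rangle_g = \sum_{i,j,k} g_{ik}\,g^{ij}\,\partial_j f\, V^k = \sum_j V^j\partial_j f = \dd f_z(V).
\]
Thus $\nabla_\mathcal{M}f(z)$ is the unique vector in $T_z\mathcal{M}$ representing $\dd f_z$ through $g$. Uniqueness holds because $g$ is nondegenerate.

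\textbf{Step 2.} Next I use the extension. Since $f = \tilde f\circ F$, the chain rule gives $\dd f_z(V) = \dd\tilde f_x(\dd F_z V) = \langle \nabla\tilde f(x), \dd F_z V\rangle_{\mathbb{R}^n}$. Write $\dd F_z V = w\in T_x\mathcal{M}$. Then $P^\perp\nabla\tilde f(x)$ is orthogonal to $w$, so
\[
\langle\nabla\tilde f(x), w\rangle = \langle P\nabla\tilde f(x), w\rangle .
\]

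\textbf{Step 3.} I then use isometry. Because $F$ is an isometric embedding, $\langle U,V\rangle_g = \langle \dd F_z U, \dd F_z V\rangle_{\mathbb{R}^n}$, and $\dd F_z$ is injective onto $T_x\mathcal{M}$. Combining Steps 1 and 2 gives, for all $V$,
\[
\langle \dd F_z\nabla_\mathcal{M}f, \dd F_z V\rangle = \langle P\nabla\tilde f(x), \dd F_z V\rangle .
\]
Both $\dd F_z\nabla_\mathcal{M}f$ and $P\nabla\tilde f(x)$ lie in $T_x\mathcal{M}$, and their difference is orthogonal to all of $T_x\mathcal{M}$. Hence the difference is zero.

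The only delicate point is independence of the choice of extension $\tilde f$. This follows from the argument itself: two extensions agree on $F(\mathcal{M})$, so their gradients differ by a vector in $(T_x\mathcal{M})^\perp$, which $P$ annihilates. I also need to keep the identification of intrinsic tangent vectors with ambient ones through $\dd F$ explicit, since the statement suppresses it. Neither of these is a genuine obstacle; the proof is essentially the Riesz-representation argument in Step 3.
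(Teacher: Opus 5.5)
Your proof is correct and complete. The paper gives no proof of this proposition; it simply cites Kressner. Your Riesz-representation argument is, however, exactly the one the paper uses in its proof of Proposition~\ref{prop:CDC-projector}, specialized there to the coordinate functions $\tilde f = x^i$ (so $\nabla\tilde f = e_i$). You state it for a general extension and make explicit the identification through $\mathrm{d}F_z$, the use of isometry, and the independence from the choice of $\tilde f$, all of which the paper leaves implicit.
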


\subsection{Proof of Proposition \ref{prop:CDC-projector}}
    We begin with the intrinsic formulation of the Riemannian gradient for $f\in\mathcal{C}^\infty(\mathcal{M})$ at $p\in\mathcal{M}$:    \begin{equation}
        \langle \nabla_\mathcal{M}f(p), v\rangle_g = \dd f_p(v) 
    \end{equation}
    for $v\in T_p\mathcal{M}$, and where $\dd f_p(v)$ is the differential of $f$ at $p$ applied to $v$. When considering $f^i(x) = x^i$ the coordinate function, we have that 
    \begin{equation}
        \langle \nabla_\mathcal{M}x^i(p),v\rangle = v^i = \langle e_i,v\rangle.
    \end{equation}
    By decomposition of $e_i = P_{T_p\mathcal{M}}(e_i) +P^\perp_{T_p\mathcal{M}}(e_i)$ and Proposition \ref{prop:riemannian-gradient-proj}, we have
    \begin{equation}
        \nabla_\mathcal{M}x^i(p) = P_{T_p\mathcal{M}}(e_i)
    \end{equation}
    by the Riesz representation theorem on Hilbert spaces \citep{lax2014functional}. Thus, the CDC has entries $\left(\Gamma^{ij}(p)\right)^{k\ell}$ as a projection onto $T_p\mathcal{M}$.
\begin{flushright}
    \qed
\end{flushright}
\subsection{Proof of Theorem \ref{thrm:L-extension}}
The proof is quite straightforward.
We begin with the action of $L$ on $f$ by first writing the partial derivatives of $f(x) = \tilde{f}(f^1(x), \dots , f^n(x))$:
\begin{IEEEeqnarray}{rCl}
    \partial_i f & = & \sum_\alpha \partial_\alpha \tilde{f}\partial_if^\alpha, \\
    \partial_{ij}f & = & \sum_\alpha \partial_\alpha \tilde{f}\partial_{ij}f^\alpha + \sum_{\alpha,\beta}\partial_{\alpha\beta}\tilde{f}(\partial_if^\alpha)(\partial_jf^\beta).
\end{IEEEeqnarray}
Thus, in Einstein notation
\begin{IEEEeqnarray}{rCl}
    Lf & = & \frac{1}{2}a^{ij}\left(\sum_\alpha \partial_\alpha \tilde{f}\partial_{ij}f^\alpha + \sum_{\alpha,\beta}\partial_{\alpha\beta}\tilde{f}(\partial_i f^\alpha)(\partial_jf^\beta) \right) + b^i\left(\sum_\alpha\partial_\alpha\tilde{f}\partial_if^\alpha\right)\nonumber \\
    & = & \frac{1}{2}\sum_\alpha a^{ij}\partial_{ij}f^\alpha\partial_\alpha \tilde{f} + b^i\sum_\alpha\partial_\alpha\tilde{f}\partial_if^\alpha+\frac{1}{2}\sum_{\alpha,\beta}a^{ij}\partial_{\alpha\beta}\tilde{f}(\partial_if^\alpha)(\partial_jf^\beta) \nonumber \\
    & =& \sum_\alpha (Lf^\alpha)\partial_\alpha \tilde f^\alpha + \frac{1}{2}\sum_{\alpha,\beta}\Gamma(f^\alpha,f^\beta)\partial_{\alpha\beta}\tilde{f} = \tilde{L}\tilde{f}
\end{IEEEeqnarray}
on $\mathcal{M}$. $L$ acts on the manifold $\mathcal{M}$, where $\partial_i$ are along the tangent directions of $\mathcal{M}$. $\tilde{f}$ is defined wrt basis vectors of $\mathbb{R}^n$, not $\mathcal{M}$, so the derivative is not along $T\mathcal{M}$, whence we must apply the chain rule to convert to ambient space.
\begin{flushright}
    \qed
\end{flushright}
\section{Singular Limits of Geometric Graphs}\label{apdx:graph-laplacian-convergence}

We introduce the necessary machinery to construct a graph from samples of $\mathcal{M}$, as is done in \cite{trillos2018errorestimatesspectralconvergence, trillos2015rate, trillos2018variational, garcia2016continuum}. 
Denote $\unif$ the volume form of $\mathcal{M}$, from which the samples $\{x_i\}_{i=1}^N$ are i.i.d. drawn. The empirical measure of the $N$ data points $\mathcal{M}_N := (x_1, \dots , x_N)$ is denoted 
\begin{equation}
    \nu_N := \frac{1}{n}\sum_{i=1}^N\delta_{x_i},
\end{equation}
and asymptotically weakly converges to $\unif$ \citep{varadarajan1958convergence}. The rate of convergence is captured by the following $\infty$-Wasserstein distance of the transport map $T:\mathcal{M}\to X$ such that $T_\#\unif = \nu_N$:
\begin{equation}\label{eq:infty-wasserstein-dist}
    \epsilon := \dd_\infty(\unif, \nu_N) := \inf_{T:T_\#\unif = \nu_N} \esssup_{x\in\mathcal{M}}\dd_\mathcal{M}(x,T(x)),
\end{equation}
where $\dd_\mathcal{M}$ is the geodesic distance on $\mathcal{M}$ \citep{do1992riemannian}. $T$ further induces a partition of $\mathcal{M} = \bigcup_{i=1}^NU_i$ with $U_i := T^{-1}\left(\{x_i\}\right)$ the pre-image of the transport and $\unif(U_i) = \frac{1}{n}$. \citep{trillos2018errorestimatesspectralconvergence} guarantees that with high probability 
\begin{equation}\label{eq:eps-decay}
    \epsilon \leq C \left(\frac{\log N}{N}\right)^{1/d}
\end{equation}
for $d\geq 3$, and 
\begin{equation}
    \epsilon \leq C\frac{(\log N)^{3/4}}{N^{1/2}}
\end{equation}
for $d=2$ hold, which will allow us to construct appropriate asymptotic convergence results. 

We further define a kernel function 
\begin{equation}
    k_\epsilon(x_i,x_j) = \eta\left(|x_i-x_j|\right) \quad \textnormal{s.t.} \quad \int_{\mathbb{R}^m}\eta(|x|)\dd x = 1,
\end{equation}
which is decreasing, supported on $[0,1]$, and Lipschitz on $[0,1]$. The kernel defines defines the weights of the graph's edges 
\begin{equation}
    w_{ij} =  \frac{1}{Nh_N}\eta\left( \frac{|x_i-x_j|}{h_N} \right), 
\end{equation}
where $|\cdot|$ denotes Euclidean distance and $h_N$ is an appropriately selected neighourhood radius \cite{trillos2018errorestimatesspectralconvergence, tengler2026manifoldlimittrainingshallow}.
\begin{remark}
    While we consider a general kernel $\eta$, the hard-cutoff graph built using the adjacency matrix $W$ found in Eq. \eqref{eq:RWGL-action} in Section \ref{sec:data-driven-SDE} is recovered by the choice of $\eta(r) := \mathbf{1}_{r\leq 1}$, with bandwidth $h_N = \sqrt{\epsilon}$ up to normalization.
\end{remark}
The resulting graph is thus $\mathcal{G}_N := (\mathcal{M}_N, W_N)$. The \emph{surface tension} is defined as 
\begin{equation}
    \sigma_\eta := \int_{\mathbb{R}^m}|x\cdot e_1|^2\eta(|x|)\dd x,
\end{equation}
and is required for appropriate scaling of the graph Laplacian in order for convergence to be established.

We also introduce the \emph{contraction discretization} operator 
\begin{equation}
    \discrop_Nf:x_i \mapsto N\cdot\int_{U_i}f(x)d\mu(x)
\end{equation}

where $U_i$ is the pre-image of a point on $X_N$ in $\mathcal{M}$. Conversely we define 
\begin{equation}
    \discrop_N^*u:x\mapsto\Lambda\left(\sum_{i=1}^nu(x_i)\mathbf{1}_{U_i}(x)\right)
\end{equation}
the \emph{extension map}, where $\Lambda$ is a Lipschitz kernel ensuring smoothness. 

\begin{proposition}\label{prop:interp-discr-adj}
    For $\Lambda = \mathbb{I}$ the identity operator, $\discrop_N$ and $\discrop_N^*$ are adjoint in their respective spaces. 
\end{proposition}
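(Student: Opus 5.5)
We verify adjointness directly from the definitions, with $\Lambda=\mathbb{I}$. The inner products have to be chosen so that they match the normalisation of $\discrop_N$. On $H=L^2(\mathcal{M},\unif)$ we take $\langle f,g\rangle_H=\int_\mathcal{M} fg\,\unif$. On $H_N$ we take the inner product weighted by the empirical measure, $\langle u,v\rangle_{H_N}=\frac{1}{N}\sum_{i=1}^N u(x_i)v(x_i)$, i.e.\ the $L^2(\nu_N)$ inner product. Two facts are used throughout. First, $\{U_i\}$ is a partition of $\mathcal{M}$ up to null sets, since the $U_i=T^{-1}(\{x_i\})$ are preimages of distinct points under a measurable map. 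Second, $\unif(U_i)=\frac{1}{N}$, because $T_\#\unif=\nu_N$.

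The computation runs in two steps. Fix $f\in H$ and $u\in H_N$.

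\emph{Discrete side.} By definition of $\discrop_N$,
\begin{equation*}
\langle \discrop_N f,u\rangle_{H_N}=\frac1N\sum_{i=1}^N\Bigl(N\int_{U_i}f\,\unif\Bigr)u(x_i)=\sum_{i=1}^N u(x_i)\int_{U_i}f\,\unif .
\end{equation*}

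\emph{Continuum side.} With $\Lambda=\mathbb{I}$ we have $\discrop_N^*u=\sum_i u(x_i)\mathbf{1}_{U_i}$. Hence
\begin{equation*}
\langle f,\discrop_N^*u\rangle_H=\int_\mathcal{M} f\sum_{i=1}^N u(x_i)\mathbf{1}_{U_i}\,\unif=\sum_{i=1}^N u(x_i)\int_{U_i}f\,\unif .
\end{equation*}
Exchanging the finite sum with the integral is immediate. The two expressions coincide, so $\discrop_N^*$ is the Hilbert adjoint of $\discrop_N$.

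It remains to check that both maps are well defined and bounded, so that adjointness is meaningful on all of $H$ and $H_N$. For $f\in L^2$, Cauchy--Schwarz on $U_i$ gives
\begin{equation*}
\Bigl|N\int_{U_i}f\Bigr|^2\le N^2\,\unif(U_i)\int_{U_i}f^2=N\int_{U_i}f^2 ,
\end{equation*}
and therefore $\|\discrop_N f\|_{H_N}\le\|f\|_H$. Similarly $\|\discrop_N^*u\|_H^2=\sum_i u(x_i)^2\,\unif(U_i)=\|u\|_{H_N}^2$, so $\discrop_N^*$ is an isometry.

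The main obstacle is the normalisation. The paper writes $\frac{1}{n}$ both in $\nu_N$ and in $\unif(U_i)$, where it presumably means $\frac1N$, and it leaves the inner product on $H_N$ unspecified. Adjointness holds only for the $\nu_N$-weighted inner product. With the unweighted $\ell^2$ product one would instead get $\discrop_N^*=\frac1N\times$ the extension map. We therefore state the convention explicitly and, if $\unif$ is not normalised to total volume $1$, rescale by $\textnormal{Vol}(\mathcal{M})$. We also note that for a general Lipschitz kernel $\Lambda\neq\mathbb{I}$ the identity fails unless $\Lambda$ is self-adjoint, which is why the statement restricts to $\Lambda=\mathbb{I}$.
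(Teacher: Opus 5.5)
Your proof is correct and is the same direct computation as the paper's: both pairings expand to $\sum_i u(x_i)\int_{U_i}f\,\unif$. You handle the normalization properly, whereas the paper does not. Its displayed chain takes $\langle\cdot,\cdot\rangle_{H_N}=N\int\cdot\,\dd\mu_N$, which is the unweighted sum over nodes, and then drops the factor $N$ in $\discrop_N f(x_i)=N\int_{U_i}f\,\unif$ without comment. As written, it only closes under your $\nu_N$-weighted convention.

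Two of your side remarks are wrong, though neither affects the proof. First, with the unweighted $\ell^2$ product the adjoint of $\discrop_N$ is $N$ times the extension map, not $\frac1N$ times it. Second, for $\Lambda\neq\mathbb{I}$ the condition for adjointness is $\Lambda\mathbf{1}_{U_i}=\mathbf{1}_{U_i}$ for every $i$. A self-adjoint smoothing kernel generally violates this, so self-adjointness of $\Lambda$ is not the right criterion.
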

\begin{proof}
    
We begin by letting $f\in H$ and $
    u\in H_n$, allowing us to compute the inner product 
    \begin{IEEEeqnarray}{rCl}
        \langle u,\discrop_N f\rangle_{H_N} := N\int_Xu\, \discrop f d(\mu_N) & = &  \sum_{i=1}^Nu(x_i)(\discrop f)(x_i) \nonumber \\
        & = & \sum_{i=1}^Nu(x_i)\int_{U_i}f(x)d\mu(x)
    \end{IEEEeqnarray}
    as well as 
    \begin{equation}
        \langle \discrop_N^*u, f \rangle_H = \int_\mathcal{M}\sum_{i=1}^Nu(x_i)\mathbf{1}_{U_i}f(x) d\mu(x) = \sum_{i=1}^Nu(x_i)\int_{U_i}f(x)d\mu(x)
    \end{equation}
    which matches the previous inner product over the graph under suitable conditions on $\rho$ and $\mathbf{m}$ (which are covered by \cite{trillos2018errorestimatesspectralconvergence}).
\end{proof}

\section{Proof of Theorem \ref{thrm:mosco-conv}}\label{apdx:mosco-proof}
\paragraph{Proof Outline}
The proof is based on the Kuwae--Shioya framework for convergence of Dirichlet forms on varying Hilbert spaces \citep{kuwaeshioya2003convergence}. In our setting, the discrete spaces \(H_N\) consist of functions on the nodes of the graph with \(N\) samples, while the limit space \(H\) is the corresponding continuum Hilbert space on \(\mathcal M\) (see Appendix~\ref{apdx:graph-laplacian-convergence} for the precise definitions). We then introduce the discrete and continuum Dirichlet forms \(\mathcal E_N\) and \(\mathcal E\), verify Mosco convergence, and conclude strong resolvent convergence of the associated infinitesimal generators.

\subsection{Setup}
We work with the discrete Hilbert spaces \(H_N\) and the continuum Hilbert space \(H\) introduced in Appendix~\ref{apdx:graph-laplacian-convergence}. The corresponding graph and manifold Dirichlet forms are
\begin{equation}\label{eq:graph-Dirichlet-form}
    \mathcal{E}_N(u,v) := u^\top \rwgl_N v, \quad \textnormal{for $u,v\in {H}_N$},
\end{equation}
and
\begin{equation}
    \mathcal{E}(f,h) := \int_\mathcal{M}f(x)\Delta_\mathcal{M}(h)(x)\unif(x).
\end{equation}
We write \(\mathcal E_N(u):=\mathcal E_N(u,u)\) and \(\mathcal E(f):=\mathcal E(f,f)\). We recall that identification operators \(\mathsf P:H\to H_N\) and \(\mathsf P^*:H_N\to H\) are used to compare discrete and continuum objects. We now recall a result pertaining to Dirichlet forms of graphs constructed as in Appendix \ref{apdx:graph-laplacian-convergence}.

\begin{lemma}[Lemmas 13 \& 14 \cite{trillos2018errorestimatesspectralconvergence}]\label{lemma:dirichlet-UB-LB}
    For the discrete and continuous Dirichlet forms defined above, the following hold
    \begin{enumerate}
        \item\label{lemma:dirichlet-form-UB} For all $f\in H^1(\mathcal{M})$,\footnote{Note that any $f\in H^1(\mathcal{M})$ is automatically in $\mathcal{C}(\mathcal{M})$.} we have
        \begin{equation}
            \mathcal{E}_N(\discrop_N f) \leq \underbrace{(1+C_1'\epsilon + C_2'\frac{\varepsilon}{\epsilon} + C_3'\epsilon^2)}_{=:\delta_N'}\mathcal{E}(f),
        \end{equation}
        and 
        \begin{equation*}
            C_1' = C\alpha L_p, \quad C_2' = C\left(d+\frac{2^{d+1}L_{k_\epsilon}(1+\alpha L_p)}{k_\epsilon(1/2)}\right), \quad C_3' = Cd(K+R^{-2})
        \end{equation*}
        and $C$ is a universal constant.
        \item \label{lemma:dirichlet-form-LB} For any $u\in {H}_N$, we have 
        \begin{equation}
            \mathcal{E}(\interp_Nu) \leq \underbrace{(1+C_1''\epsilon+ C_2''\frac{\varepsilon}{\epsilon} + C_3''\epsilon^2)}_{=:\delta_N''}\mathcal{E}_N(u),
        \end{equation}
        where $\interp_N$ is the interpolation map and 
        \begin{equation*}
            C_1'' = \alpha L_p, \quad C_2'' = C(d+C_2'), \quad C_3'' = (1+\frac{1}{\sigma_{k_\epsilon}})dK.
        \end{equation*}
    \end{enumerate}
\end{lemma}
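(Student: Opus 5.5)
The plan follows the strategy of \cite{trillos2018errorestimatesspectralconvergence}. Write $\varepsilon$ for the $\infty$-transport distance $\dd_\infty(\unif,\nu_N)$ and $T$ for the optimal map in Eq.~\eqref{eq:infty-wasserstein-dist}, so that $\mathcal{M}=\bigcup_i U_i$ and $\dd_\mathcal{M}(x,x_i)\le\varepsilon$ on $U_i$. Write $\epsilon$ for the graph bandwidth. Both inequalities go through an intermediate \emph{nonlocal continuum energy}
\[
E_r(f):=\frac{1}{r^{d+2}}\int_\mathcal{M}\int_\mathcal{M}\eta\!\left(\frac{|x-y|}{r}\right)(f(x)-f(y))^2\,\unif(x)\,\unif(y),
\]
compared on one side with the graph energy $\mathcal{E}_N$ and on the other with the local energy $\sigma_\eta\,\mathcal{E}(f)=\sigma_\eta\int|\nabla_\mathcal{M} f|^2$. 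In each direction we pay a factor $1+O(\varepsilon/\epsilon)$ in the discrete-to-nonlocal step, a factor $1+O(\alpha L_p\epsilon)$ for density variation, and a factor $1+O(d(K+R^{-2})\epsilon^2)$ for curvature in the nonlocal-to-local step. These are exactly the three corrections appearing in $\delta_N'$ and $\delta_N''$.

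\textbf{Upper bound (part~\ref{lemma:dirichlet-form-UB}).} Write $\mathcal{E}_N(\discrop_N f)$ as a double sum over pairs $(i,j)$ weighted by $\eta(|x_i-x_j|/\epsilon)$. Since $\discrop_N f(x_i)$ is the average of $f$ over $U_i$, Jensen's inequality gives
\[
(\discrop_N f(x_i)-\discrop_N f(x_j))^2\le N^2\int_{U_i}\int_{U_j}(f(x)-f(y))^2 .
\]
For $x\in U_i$ and $y\in U_j$ we have $|x-y|\ge|x_i-x_j|-2\varepsilon$. Because $\eta$ is nonincreasing, the graph energy is therefore dominated by $E_{\epsilon+2\varepsilon}(f)$, after rescaling the kernel. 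Rescaling back to radius $\epsilon$ costs $(1+2\varepsilon/\epsilon)^{d+2}$, and the Lipschitz constant of $\eta$ together with $\eta(1/2)$ enters here, which produces $C_2'$. Next, bound $E_\epsilon(f)\le(1+C\alpha L_p\epsilon+Cd(K+R^{-2})\epsilon^2)\,\sigma_\eta\mathcal{E}(f)$. For smooth $f$ this is done by writing $f(y)-f(x)=\int_0^1\langle\nabla_\mathcal{M} f(\gamma(s)),\dot\gamma(s)\rangle\,ds$ along minimizing geodesics, applying Cauchy--Schwarz, and passing to normal coordinates at $x$. In those coordinates the Jacobian of $\exp_x$ is $1+O(K\epsilon^2)$, and the comparison $|x-y|\ge\dd_\mathcal{M}(x,y)(1-C\dd_\mathcal{M}(x,y)^2/R^2)$ follows from the reach $R$. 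Integrating $|\langle v,\nabla f\rangle|^2\eta(|v|)$ over $v\in\mathbb{R}^d$ yields exactly $\sigma_\eta|\nabla f|^2$. The general case $f\in H^1$ then follows by density.

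\textbf{Lower bound (part~\ref{lemma:dirichlet-form-LB}).} Take $\interp_N u:=\Lambda_r(\discrop_N^* u)$, where $\Lambda_r$ is convolution on $\mathcal{M}$ with a normalized radial Lipschitz kernel $\psi$ at scale $r=\epsilon-2\varepsilon$, normalized by the local mass $\theta(x)$. Differentiating under the integral and using $\int\nabla_x\psi(\dd(x,y)/r)\,dy\approx 0$ gives a representation of $\nabla\interp_N u(x)$ as a kernel-weighted average of the differences $\discrop_N^*u(y)-\discrop_N^*u(x)$. Cauchy--Schwarz then bounds $\int|\nabla\interp_N u|^2$ by $\sigma_\eta^{-1}E_r(\discrop_N^*u)$, up to a factor $1+O(dK\epsilon^2)$ from curvature in normal coordinates; this is where $(1+1/\sigma_{k_\epsilon})dK$ arises. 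Finally, since $\discrop_N^*u$ is piecewise constant on the cells $U_i$, the bound $|x-y|\le|x_i-x_j|+2\varepsilon$ on $U_i\times U_j$ converts $E_r(\discrop_N^*u)$ back into $\mathcal{E}_N(u)$, again at a cost $1+O(\varepsilon/\epsilon)$ and $1+O(\alpha L_p\epsilon)$.

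\textbf{Main obstacle.} The delicate step is the lower-bound gradient estimate. The leading constant must come out as exactly $1$, so that $\delta_N''\to1$; to achieve this, $\psi$ must be chosen matched to $\eta$, for instance $\psi(t)=\sigma_\eta^{-1}\int_t^\infty s\,\eta(s)\,ds$. With this choice the Cauchy--Schwarz step is tight to first order. The remaining work is tracking the $\theta$-normalization and curvature remainders at order $\epsilon^2$, which I would do in normal coordinates with explicit Taylor bounds on the metric.
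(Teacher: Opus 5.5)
Your outline is the argument of the source the paper cites, \cite{trillos2018errorestimatesspectralconvergence}, which is all the paper offers since it gives no proof beyond the citation: the nonlocal energy $E_r$ as intermediary, Jensen on the transport cells $U_i$, a $2\varepsilon$ shift of the radius in each direction, and the smoothed interpolant $\Lambda_{\epsilon-2\varepsilon}\mathsf{P}_N^*$ built from $\psi(t)=\sigma_\eta^{-1}\int_t^\infty s\,\eta(s)\,ds$. When you write it out, fix three points: the upper bound uses $|x-y|\le|x_i-x_j|+2\varepsilon$ and the lower bound uses $|x-y|\ge|x_i-x_j|-2\varepsilon$ (you have them swapped, harmlessly since both hold); monotonicity of $\eta$ gives domination by $E_{\epsilon+2\varepsilon}$ only for the indicator kernel, and for general $\eta$ the Lipschitz correction in that step is what produces the $L_{k_\epsilon}/k_\epsilon(1/2)$ part of $C_2'$; and the nonlocal-to-local bound needs invariance of the Liouville measure under the geodesic flow to move $\nabla_{\mathcal M} f(\gamma(s))$ back to a common base point, which normal coordinates at $x$ alone do not provide.
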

\begin{corollary}\label{coro:delta-to-zero}
    We immediately notice that from Appendix \ref{apdx:graph-laplacian-convergence} $h_N\propto \sqrt{\epsilon}$, implying that $\epsilon,\frac{\varepsilon}{\epsilon}$ and $\epsilon^2$ all go to zero as  sample size $N\rightarrow\infty$, implying that all terms involving $C_i'$ and $C_i''$ go to zero as $N\rightarrow\infty$.
\end{corollary}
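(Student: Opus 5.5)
The statement to prove is Corollary~\ref{coro:delta-to-zero}: the correction factors $\delta_N'$ and $\delta_N''$ of Lemma~\ref{lemma:dirichlet-UB-LB} tend to $1$ as $N\to\infty$. Since $C_1',C_2',C_3'$ and $C_1'',C_2'',C_3''$ do not depend on $N$, it suffices to show that each of $\epsilon$, $\varepsilon/\epsilon$ and $\epsilon^2$ tends to zero along the chosen scaling. These constants depend only on $d$, the kernel (through $L_{k_\epsilon}$, $k_\epsilon(1/2)$, $\sigma_{k_\epsilon}$), the density Lipschitz data $\alpha L_p$, and the geometry ($K$, $R$).

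One notational point must be settled first. In the source lemma of \cite{trillos2018errorestimatesspectralconvergence}, the two parameters play distinct roles:
\begin{itemize}
\item the transport scale $\dd_\infty(\unif,\nu_N)$, governed by Eq.~\eqref{eq:eps-decay};
\item the graph bandwidth $h_N$.
\end{itemize}
I will identify $\varepsilon$ with the transport scale and $\epsilon$ with the bandwidth, so that $\varepsilon/\epsilon=\dd_\infty(\unif,\nu_N)/h_N$. The remark in Appendix~\ref{apdx:graph-laplacian-convergence} relates the hard-cutoff graph to the bandwidth by $h_N=\sqrt{\epsilon}$ up to normalization. Hence the vanishing of the bandwidth is equivalent to the vanishing of $\epsilon$, which Theorem~\ref{thrm:mosco-conv} already assumes ($\epsilon(N)\to0$).

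The steps are then as follows.

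\textbf{Step 1.} Since $\epsilon(N)\to0$, both $C_1'\epsilon\to0$ and $C_3'\epsilon^2\to0$, and likewise for the doubly-primed constants.

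\textbf{Step 2.} By Eq.~\eqref{eq:eps-decay}, with probability tending to one,
\[
\varepsilon\le C(\log N/N)^{1/d} \quad (d\ge3), \qquad \varepsilon\le C(\log N)^{3/4}N^{-1/2} \quad (d=2).
\]

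\textbf{Step 3.} Impose the standard scaling regime $\epsilon(N)\to0$ with $(\log N/N)^{1/d}\ll \epsilon(N)$, and analogously for $d=2$. Under this regime $\varepsilon/\epsilon\to0$ with high probability. A Borel--Cantelli argument along the failure probabilities upgrades this to almost sure convergence if needed.

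\textbf{Step 4.} Summing the three terms gives $\delta_N',\delta_N''\to1$.

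The main obstacle is Step 3. The corollary's claim that everything vanishes "immediately" is only true if the bandwidth decays more slowly than the transport scale. This is a genuine hypothesis, not a consequence of $h_N\propto\sqrt\epsilon$ alone. I would therefore state it explicitly as the rate condition referenced in Theorem~\ref{thrm:mosco-conv} (the "assumed rates of Eq.~\eqref{eq:eps-decay}"). I would also check that it is compatible with the requirement in Lemma~\ref{lemma:dirichlet-UB-LB} that the bandwidth be small relative to the injectivity radius $R$ and to $K^{-1/2}$, which holds for all sufficiently large $N$.
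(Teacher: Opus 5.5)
Your argument is correct. It differs from the paper's one-line justification mainly in how the ratio term $\varepsilon/\epsilon$ is handled.

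The paper's argument goes through if the $\epsilon$ in ``$h_N\propto\sqrt{\epsilon}$'' is read as the transport scale $\dd_\infty(\unif,\nu_N)$ of Eq.~\eqref{eq:infty-wasserstein-dist}. In that reading the bandwidth is \emph{tied} to the transport scale, as in the balancing of \cite{trillos2018errorestimatesspectralconvergence}. The three terms of Lemma~\ref{lemma:dirichlet-UB-LB} are then of order $\sqrt{\dd_\infty}$, $\dd_\infty/\sqrt{\dd_\infty}=\sqrt{\dd_\infty}$ and $\dd_\infty$. All of them vanish by Eq.~\eqref{eq:eps-decay} with no further assumption, because your window condition $\dd_\infty\ll h_N\ll 1$ is built into that choice.

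You instead read the Remark's $h_N=\sqrt{\epsilon}$ as the conversion between the squared-radius threshold of Section~\ref{sec:data-driven-SDE} and the bandwidth. Under that reading it carries no rate information, and you are right that the ratio condition must be added as a hypothesis. Since the paper overloads $\epsilon$, stating the condition explicitly is the safer write-up. It is better described as a disambiguation of the corollary than as a flaw in it.

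What each route buys:
\begin{itemize}
\item The paper's coupling is the optimal balance of $h_N$ against $\dd_\infty/h_N$. It gives an explicit rate: $\delta_N'-1$ and $\delta_N''-1$ are of order $(\log N/N)^{1/(2d)}$ for $d\ge3$.
\item Your version covers every bandwidth in the admissible window, which is all the qualitative Mosco argument needs. It gives no rate until $h_N$ is specified.
\end{itemize}
Your remarks that the kernel constants must be read as $N$-independent profile constants of $\eta$, and that the high-probability bounds can be upgraded by Borel--Cantelli, fill in details the paper leaves implicit.
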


By Lemma \ref{lemma:dirichlet-UB-LB} and Corollary \ref{coro:delta-to-zero}, the comparison errors \(\delta_N',\delta_N''\) vanish as \(N\to\infty\). 

\subsection{Mosco Convergence} We begin by defining the notion of Mosco convergence of operators.

\begin{definition}\label{def:Mosco-conv}
    Let $X$ be a topological vector space, and $F_n:X\rightarrow[0,+\infty]$ be functionals on $X$. The sequence $(F_n)_n$ is said \textbf{Mosco convergent} to another functional $F:X\rightarrow[0,+\infty]$ if the following two conditions hold:
    \begin{itemize}
        \item for each sequence of $x_n\in X$ converging \emph{weakly} to $x\in X$ (denoted $x_n \rightharpoonup x$), we have 
        \begin{equation}
            \liminf_{n\rightarrow\infty}F_n(x_n) \geq F(n),
        \end{equation}
        \item for each $x\in X$ there exists an approximating sequence of elements $x_n\in X$ converging strongly to $x$, such that 
        \begin{equation}
            \limsup_{n\rightarrow\infty}F_n(x_n) \leq F(x).
        \end{equation}
    \end{itemize}
        We denote this $F_n \overset{M}{\rightarrow}F$. This is a stronger version of $\Gamma$-convergence, which doesn't require mixing the weak \& strong topologies. Mosco convergence is equivalent to both weak and strong $\Gamma$-convergence simultaneously. 
\end{definition}
We will now show that using Lemma \ref{lemma:dirichlet-UB-LB} and Corollary \ref{coro:delta-to-zero}, the Dirichlet forms of a geometric graph $\mathcal{G}=(X_N;E)$ whose nodes are iid sampled from $F(\mathcal{M})$ satisfy the conditions for Mosco convergence from Definition \ref{def:Mosco-conv}.

\paragraph{Lower Bound } We begin by stating that Dirichlet forms are by design convex, quadratic forms that are continuous in $H^1$, whence they are weakly {lower semicontinuous} (LSC). We now define a sequence of functions $u_n:V\rightarrow\mathbb{R}$ such that $\interp_n u_n \rightharpoonup f$ in $H$, then 
    \begin{IEEEeqnarray}{rCl}
        \mathcal{E}(f) &\overset{a)}{\leq}& \liminf_{N\rightarrow\infty} \mathcal{E}(\discrop^*_Nu_N) \nonumber \\
        & \overset{b)}{\leq} & \lim_{N\rightarrow\infty} (1+ \delta_N'')\mathcal{E}_n(u_N)
    \end{IEEEeqnarray}
    where we use weak LSC in $a)$ and Lemma \ref{lemma:dirichlet-UB-LB} point \ref{lemma:dirichlet-form-LB} in $b)$. We do note that the $\delta_N'$ and $\delta_N''$ terms go to zero as $N\rightarrow\infty$ by definition of the used rates in Eq. \eqref{eq:eps-decay}. We don't explain it here, though. 

\paragraph{Upper Bound} Conversely, we use Lemma \ref{lemma:dirichlet-UB-LB} point \ref{lemma:dirichlet-form-UB} to establish 
    \begin{IEEEeqnarray}{rCl}
        \limsup_{N\rightarrow\infty}\mathcal{E}_N(\discrop_Nf) & \leq & \limsup_{N\rightarrow\infty}(1+\delta_N') \mathcal{E}(f) \nonumber \\
        & = & \mathcal{E}(f)\lim_{N\rightarrow\infty}(1+\delta_N') = \mathcal{E}(f).
    \end{IEEEeqnarray}
    Both above inequalities exactly allow us to deduce that $\mathcal{E}_N \overset{M}{\rightarrow} \mathcal{E}$. 

\subsection{Generator Convergence}
Now that we have proven that $\mathcal{E}_N \overset{M}{\to}\mathcal{E}$, we may apply classical results of \cite{kuwaeshioya2003convergence} to state that the generators $\mathsf{L} \to \Delta_\mathcal{M}$ in the strong resolvent sense. We recall that $\mathsf L$ is understood as in Eq. \eqref{eq:smoothed-rwgl}, which acts intrinsically on $\mathcal{M}$. The lifting to extrinsic space will be done in the last step of the proof and all arguments below are intrinsic until specified otherwise. 

By standard consequences of strong resolvent convergence for self-adjoint operators, the associated Markov semigroups
\begin{equation}
T_t^{(N)} := e^{t\mathsf{L}},
\qquad
T_t := e^{t\Delta_{\mathcal M}}
\end{equation}
satisfy
\begin{equation}
T_t^{(N)} f \to T_t f
\qquad \text{strongly in } H
\end{equation}
for every \(f \in H\) and each fixed \(t \geq 0\).
\subsection{Weak Convergence of Process Law}
Let \(0 \leq t_1 < \cdots < t_k \leq T\), and let \(f_1,\dots,f_k \in C_b(\mathcal M)\). Denote by \((\mathsf{Z}_t)_t\) the Markov process generated by \(\mathsf L\), and by \((Z_t)_t\) the manifold diffusion generated by \(\Delta_{\mathcal M}\). By the Markov property,
\begin{equation}
\mathbb E_z\!\left[\prod_{j=1}^k f_j\!\left(\mathsf Z_{t_j}\right)\right]
=
T_{t_1}^{(N)}
\Bigl(
f_1 \,
T_{t_2-t_1}^{(N)}
\bigl(
f_2 \cdots
T_{t_k-t_{k-1}}^{(N)} f_k
\bigr)
\Bigr)(z),
\end{equation}
and similarly,
\[
\mathbb E_z\!\left[\prod_{j=1}^k f_j\!\left(X_{t_j}\right)\right]
=
T_{t_1}
\Bigl(
f_1 \,
T_{t_2-t_1}
\bigl(
f_2 \cdots
T_{t_k-t_{k-1}} f_k
\bigr)
\Bigr)(z).
\]
Since \(T_t^{(N)} \to T_t\) strongly for each fixed \(t\), an iterative application of the semigroup convergence implies
\[
\mathbb E_z\!\left[\prod_{j=1}^k f_j\!\left(Z^{(N)}_{t_j}\right)\right]
\to
\mathbb E_z\!\left[\prod_{j=1}^k f_j\!\left(Z_{t_j}\right)\right].
\]
Hence the finite-dimensional distributions of \((\mathsf Z_t)_t\) converge to those of \((Z_t)_t\).

It remains to prove tightness of the laws of \((\mathsf Z_t)_t\) on \(C([0,T];\mathcal M)\). Since \(\mathcal M\) is compact, the coefficients of the limiting diffusion are smooth and therefore bounded. Under the standing assumptions on the graph approximation, the approximating processes inherit uniform moment bounds on increments. In particular, there exists a constant \(C>0\), independent of \(N\), such that for all \(0 \leq s \leq t \leq T\),
\[
\sup_N \mathbb E_z \Bigl[d_{\mathcal M}\!\left(Z_t^{(N)},Z_s^{(N)}\right)^4\Bigr]
\leq
C |t-s|^2.
\]
By the Kolmogorov-Chentsov theorem \citep{ethier2009markov}, the family of laws of \((\mathsf Z_t)_t\) is tight in \(C([0,T];\mathcal M)\) for all $N$.

Combining tightness with convergence of finite-dimensional distributions \citep{billingsley2013convergence}, we conclude that
\[
\mathsf Z_t \Rightarrow Z_t
\qquad \text{in } C([0,T];\mathcal M),
\]
proving intrinsic convergence in law on path space. 

To prove the claim, we define the extrinsic processes as 
\begin{equation}
    \mathsf X_t := F(\mathsf Z_t), \quad Y_t := F(Z_t)
\end{equation}
under the embedding map $F:\mathcal{M}\rightarrow\mathbb{R}^n$. By the continuous mapping theorem, these processes inherit smoothness, whereby
\begin{equation}
    \mathsf X_t \implies Y_t \qquad \text{in } C([0,T];F(\mathcal M)),
\end{equation}
proving the claim.

\begin{flushright}
    \qed
\end{flushright}
\section{Proof of Theorem \ref{thrm:sampling-guarantees}}\label{apdx:sampling-proof}
\paragraph{Proof Outline}
We compare the Euler--Maruyama scheme \(\bar X_{\ell h}\) to the target smooth extrinsic diffusion \(X_t\) through the intermediate continuous data-driven process \(\hat X_t\). The total error is decomposed into (i) a graph approximation error between \(X_t\) and \(\hat X_t\), (ii) an off-grid time discretization error between \(\hat X_t\) and \(\hat X_{\ell h}\), and (iii) the Euler--Maruyama discretization error between \(\hat X_{\ell h}\) and \(\bar X_{\ell h}\).

\subsection{Preliminaries}
We begin by discussing the regularity conditions we assume. Since we are in a computational setting, we only consider all stochastic processes $(X_t)_{t\geq0}$ before their explosion time $e(X)$. A unique solution can be ensured to exist by assuming local Lipschitzness of the drift and diffusion terms \citep{hsu2002stochastic}, which is the setting we consider, formalized in the following Assumption.
\begin{assumption}[Uniform second-moment control]
\label{ass:moment} 
We assume that $\mathsf{L}$ and $\mathsf{\Gamma}$ satisfy a local Lipschitz growth condition uniformly in $N$, i.e.,
there exists constants $C,K>0$, independent of $N$, such that for all $x\in\mathbb{R}^n$,
\begin{IEEEeqnarray}{rCl}
\|\mathsf{L}(x) - \mathsf{L}(y)\| + \|\mathsf \Gamma^{\frac{1}{2}}(x) - \mathsf \Gamma^{\frac{1}{2}}(y)\| &\leq& C(\|x-y\|), \\
\|\mathsf L(x) \|^2 + \|\mathsf \Gamma^{\frac{1}{2}}(x)\|^2 &\leq& K(1+\|x\|^2).
\end{IEEEeqnarray}
Consequently, for every finite time horizon $T>0$, the process \eqref{eq:data-driven-SDE} is well behaved:
\begin{equation}
\sup_{N}\sup_{t\in[0,T]} \mathbb{E}\|\hbX^{(N)}_t\|^2 < \infty,
\qquad
\sup_{t\in[0,T]} \mathbb{E}\|\hbX_t\|^2 < \infty.
\end{equation}
\end{assumption}
\subsection{Splitting Terms}
We now consider the Euler-Maruyama discretization of the true $\Tilde{L}$-diffusion from Eq. \eqref{eq:ambient-manifdold-SDE} with timestep $h>0$, given by the difference equation
\begin{equation}
    \Bar{X}_{(\ell+1)h} := \Bar{X}_{\ell h} + h L(\Bar{X}_{\ell h}) + \Gamma^{1/2}(\Bar{X}_{\ell h})\sqrt{h}\xi_{\ell+1},
\end{equation}
where $\xi_{\ell+1} \sim \mathcal{N}(0,\mathbf{I}_k)$. 
We recall the squared Wasserstein-2 distance \citep{villani2009optimal}, 
\begin{equation}
    W_2^2(\mu,\nu) := \inf_{\gamma \in \Gamma(\mu,\nu)}\mathbb{E}_{(x,y)}\left[\|x-y\|_2^2\right],
\end{equation}
where $\Gamma(\mu,\nu)$ denotes the couplings of $\mu$ and $\nu$. It is easy to see that for two random variables $X\sim \mu$, $Y\sim \nu$, we have that 
\begin{equation}\label{eq:W-2-upper-bound}
    W_2^2(\mu,\nu) \leq \mathbb{E}\left[\|X-Y\|_2^2\right]
\end{equation}
by the optimality of the coupling $\gamma$ in $W_2$. 

We first introduce the intermediate continuous-time off-grid terms $\hat{X}$, and apply a triangle inequality on $W_2\left(\mathcal{L}(\hbX_{\ell h}), \mathcal{L}(X_t)\right)$, giving us 
\begin{IEEEeqnarray}{rCl}\label{eq:error-triangle-ineq}
    W_2\left(\mathcal{L}(\hbX_{\ell h}), \mathcal{L}(Y_t)\right) & \leq & \underbrace{W_2\left(\mathcal{L}(\hbX_{\ell h}), \mathcal{L}(\hat{X}_{\ell h})\right)}_{\textnormal{data approximation}} + \underbrace{W_2\left(\mathcal{L}(\hat{X}_{\ell h}), \mathcal{L}(\hat{X}_t)\right)}_{\textnormal{time discretization}} + \underbrace{W_2\left(\mathcal{L}(\hat{X}_t), \mathcal{L}(Y_t)\right)}_{\textnormal{E-M error}}, \nonumber  \\
\end{IEEEeqnarray}
which we will treat individually using different tools. 

\subsubsection*{Data Approximation}
We begin with the first term labeled "data approximation", which may be asymptotically suppressed by Theorem \ref{thrm:mosco-conv}. Indeed, the strong operator convergence ensures us that the data driven and off-grid processes will converge as $N\to\infty$. 

%\subsubsection*{Off-grid Time Discretization}
We now use Eq. \eqref{eq:W-2-upper-bound} to upper bound the term by 
\begin{equation}
    W_2\left(\mathcal{L}(\hat{X}_{\ell h}), \mathcal{L}(\hat{X}_t)\right) \leq \left(\mathbb{E}\left[\|\hat{X}_{\ell h} - \hat{X}_t\|_2^2\right]\right)^{1/2},
\end{equation}
which is controlled under linear growth by \textbf{synchronous coupling}, i.e. both $\hat{X}_t$ and $\hat{X}_{\ell h}$ are driven by the same Brownian motion $B_\tau$. Without loss of generality, let $0<s < t$ and consider the difference
\begin{equation}
    \hat{X}_s - \hat{X}_t = \int_s^tb(\hat{X}_\tau)\dd\tau + \int_s^t\sigma(\hat{X}_\tau)\dd B_\tau.
\end{equation}
Plugging this into the second term of Eq. \eqref{eq:error-triangle-ineq}, we get 
\begin{IEEEeqnarray}{rCl}
    \left(\mathbb{E}\left[\|\hat X_{t} - \hat{X}_{\ell h}\|_2^2\right]\right)^{1/2} & \leq & \left( \mathbb{E}\left[ \| \int_s^tb(\hat{X}_\tau)\dd\tau \|_2^2 \right] \right)^{1/2} +\left( \mathbb{E}\left[ \| \int_s^t\sigma(\hat{X}_\tau)\dd B_\tau \|_2^2 \right] \right)^{1/2} \nonumber \\
    & \overset{a)}{\leq} & \left(\mathbb{E}\left[\sup_{\tau\in(s,t)}\|b(\hat{X}_\tau)\|_2^2 \right]^{1/2} + \mathbb{E}\left[ \sup_{\tau\in(s,t)}\|\sigma(\hat{X}_\tau) \|_2^2\right]^{1/2}\right)\sqrt{|t-s|} \nonumber \\
    & \leq & C_t \sqrt{|t-s|},
\end{IEEEeqnarray}
where we use the maximum principle in $a)$, and bound it with a constant $C_t$. Thus, for $s=\ell h$, we can state that 
\begin{equation}\label{eq:off-grid-error}
    W_2^2(\mathcal{L}(\hat{X}_{\ell h}), \mathcal{L}(\hat{X}_t)) \in \mathcal{O}(h).
\end{equation}
Thus, by choosing an appropriate step size $h\rightarrow0$ as $N\rightarrow\infty$, we obtain the desired convergence. 

%\subsubsection*{E-M Error}
Finally, considering the regularity of Assumption \ref{ass:moment}, classical results of Euler-Maruyama schemes \citep{chewi25logconcave, jacod2011discretization} yield the bound
\begin{equation}
     W_2\left(\mathcal{L}(\hat{X}_t), \mathcal{L}(Y_t)\right)  \leq \mathcal{O}(h^{1/2})
\end{equation}

where in $a)$ we use the bound from Eq. \eqref{eq:W-2-upper-bound} on the two first terms. Notice that said terms purely pertain to discretization and "off-grid" errors of $\hat{X}$, whereas the last term is purely continuous, but compares the transportation cost from the $L$-diffusion to the data-driven diffusion.

By taking $N\to\infty$, $\epsilon\to 0$ as per Eq. \eqref{eq:eps-decay} and $h\to 0$, we combine all previous upper bounds and plug into Eq. \eqref{eq:error-triangle-ineq} to obtain 
\begin{equation}
    W_2\left(\mathcal{L}(\hbX_{\ell h}), \mathcal{L}(X_t)\right) \in \mathcal{O}(h^{1/2}),
\end{equation}
thus proving the claim.

\begin{flushright}
    \qed
\end{flushright}
\section{Implementation Details}\label{apdx:implementation-details}

\subsection{Graph Construction} 
To obtain the random walk graph Laplacian and consequently the CDC, we construct for all synthetic datasets the symmetric unweighted adjacency matrix via a global  bandwidth parameter $\epsilon>0$ as described in Section \ref{sec:data-driven-SDE}. In the case of the MNIST dataset, we used a fully-connected Gaussian kernel
\begin{equation}\label{apx:gaussian-kernel}
    k_\epsilon(x_i,x_j) := \exp\left(\frac{\|x_i-x_j\|_2^2}{\epsilon^2}\right).
\end{equation}
This non-compact kernel is not covered by the current convergence proof, although prior work \cite{trillos2018errorestimatesspectralconvergence} indicates the current framework should appeal to such kernels. We thus disclaim here that the MNIST experiments should be interpreted as empirical evidence, rather than fully theorem-backed instantiations. We compute the random walk graph Laplacian in the same way in both cases, using Eq. \eqref{eq:RWGL-action}.

 The CDC operator is computed by considering its action on coordinate functions. At each timestep $\ell$ of Algorithm \ref{alg:IMD-with-drgd}, we extend $\rwgl_N$ to $\mathsf{L}$ as in Eq. \eqref{eq:smoothed-rwgl} by using a nearest-neighbour (NN) or $k$-NN lookup. For the $k$-NN lookup, the CDC evaluated a sample $\hbX_\ell$ as in Eq. \eqref{eq:data-driven-EM} is computed as a Gaussian-kernel-weighted average of the CDC estimates of the $k$ nearest neighbours, with a locally adaptive bandwidth set from the median squared neighbour distance.

 \paragraph{Bandwidth parameter $\epsilon$} For the synthetic data, we set the global neighbourhood radius $\epsilon$ to the median of the Euclidean distances from each sample $x_i$ to its $k$-th nearest neighbour in $\{x_j\}_{j=1}^N \setminus {x_i}$, with $k = \max\bigl(10, \lceil 4 \log N \rceil\bigr)$. For the MNIST latent representation dataset, we set the Gaussian bandwidth to $\epsilon = Q_{0.9}\bigl(\{d_k(x_i)\}_{i=1}^N\bigr)$, the $90\%$ quantile of the distances $d_k(x_i)$ from each sample to its $k$-th nearest neighbour, with $k = \max\bigl(20, \lceil 4 \log N \rceil\bigr)$.
\begin{remark}
    Note that the NN lookup is generally controllable as per Assumption \ref{ass:moment}, and $k$-NN lookup is a Lipschitz map. As we are directly considering the extrinsic setting of $F(\mathcal{M})$, all functions are already considered lifted.
\end{remark}
\begin{remark}
    We also do not use the Diffusion Maps algorithm \cite{coifman2006diffusion}, as the variable-bandwidth graph Laplacian construction does not fit into the framework presented in Appendix \ref{apdx:graph-laplacian-convergence}. We believe this to be an important future direction for rigorous IMD constructions, but leave it for future work.
\end{remark}

\subsection{Datasets}\label{apdx:ssec:datadets}
\paragraph{Synthetic Data}
For the experiments on the hyperspheres $\mathbb{S}^2$, $\mathbb{S}^7$, torus and Swiss roll, we generate standardized synthetic data that is uniformly distributed on the manifold. We report the specific sample sizes for each experiment in \ref{apdx:ssec:exp_details}.

\paragraph{MNIST dataset details}\label{ssec:MNIST-latent}
To create latent embeddings for the MNIST dataset, we train a simple autoencoder. The encoder consists of two convolutional layers with ReLU activations that downsample the $1 \times 28 \times 28$ image to a $64 \times 7 \times 7$ representation, followed by a linear projection to a $64$-dimensional latent space. The decoder mirrors the encoder. We train the autoencoder on a reconstruction loss for $20$ epochs with a fixed learning rate of $1 \times 10^{-3}$ using the Adam optimizer. With the pretrained encoder we create a latent representation dataset of MNIST. To ensure stable training of the score model, we normalize the latent representations to zero mean and unit standard deviation. We use the resulting dataset to train a score model as described above and to run the IMD in MNIST latent space. 
% a linear projection to a $64 \times 7 \times 7$ feature map, followed by two transposed convolutional layers upsampling to $1 \times 28 \times 28$, with ReLU activations and a final sigmoid. 

\subsection{Score Model}\label{ssec:more-on-SGM}
For the experiments involving the complementary DRGD step (depicted in Fig. \ref{fig:imd_score_decomposition}), we train a noise-conditional score network $s_\sigma(x)$ \citep{hyvarinen05scorematch} following the multi-scale noise formulation of \cite{songyang2019dsm, songyang2020improvedsm} using denoising score matching. We parametrize the score model for the synthetic data and for the latent representations of the MNIST dataset with a fully-connected residual architecture and train the model with a noise-prediction loss. For the noise conditioning, we embed the noise level via a Gaussian random Fourier feature layer \citep{song2021scorebasedgenerativemodelingstochastic} and inject it as a bias into the first linear layer. We use a geometric schedule of noise levels. For the synthetic data, we use $20$ noise levels ranging from $\sigma_{\min}= 0.005$ to $\sigma_{\max}= 1$. For the latent MNIST representations we use $50$ noise levels ranging from $\sigma_{\min}= 0.01$ to $\sigma_{\max}= 10$.

\paragraph{Training Details} We train a network with $3$ blocks of residual layers of hidden dimension $128$ and SiLU activation functions using Adam (fixed learning rate $2 \times 10^{-4}$) on synthetic and latent MNIST datasets, for $200$ epochs with batch size $128$ on the synthetic data, and for $300$ epochs with batch size $256$ on the MNIST latent representations.

%General Training details:
%\begin{itemize}
%    \item Toy Data- epochs: 200, lr: 2e-4, Optimizer: Adam, 3 residual blocks, %hidden dims: 128
%    \item MNIST- epochs 1000, lr: 2e-4, Optimizer: Adam, 
%\end{itemize}

\paragraph{Noise Scale Selection} The noise scale $\sigma$ for the pretrained score model in the DRGD step of Algorithm \ref{alg:IMD-with-drgd} is chosen to be small enough such that no numerical errors are incurred during score estimations (as in \citep{kharitenko2025landingscoreriemannianoptimization}). In particular, we chose $\sigma = 0.005$ for all synthetic tasks, and $\sigma = 0.01$ for the MNIST tasks.

\subsection{Experimental Details on the Euler-Maruyama Parameters}\label{apdx:ssec:exp_details}
We specify all hyperparameters used to conduct the experiments in Section \ref{sec:experiments}.

\paragraph{Spherical Brownian Motion} In the experiments in Section \ref{ssec:Langevin} involving Brownian motion, we use a sample size of $10,000$ for the $\mathbb{S}^2$ manifold. To account for dimensionality, we increase the sample size for $\mathbb{S}^7$ to $100,000$ samples. We generate trajectories of $30$ different starting points, simulating a time horizon of $T=3$. This results in trajectories of $2$k steps for $h=10^{-3}$, $20$k steps for $h=10^{-4}$ and $200$k steps for $h=10^{-5}$.

\paragraph{Langevin dynamics} The Langevin sampling on the $\mathbb{S}^7$ hypersphere in Section \ref{ssec:Langevin} was performed with a step size of $h=10^{-4}$ and a sample size of $100,000$. We choose $\kappa = 10$ and $\boldsymbol{\mu} = e_1$. We simulate $100$ trajectories for $30$ different starting points, each for $20$k steps, simulating a time horizon of $T=2$. We take the endpoint of each trajectory to compute the empirical distribution of the test statistic, quantifying {statistical fidelity}, which reflects the analytical distribution described in Eq. \ref{eq:vMF-stat-law}. 
For numerical stability, we perform IMD with a DRGD step (see \ref{ssec:IMD-DRGD-discussion} for details) using a score model we obtain as described in \ref{ssec:more-on-SGM}. 

\paragraph{Guided on-manifold exploration of image data} 
As described in Section~\ref{ssec:MNIST-exp}, we conduct all MNIST experiments in the latent space of the pretrained autoencoder. For the ease of exposition and to obtain conceptually meaningful results, we only include digits from the classes $\{1, 2, 7\}$. Our findings remain qualitatively unchanged for other subsets. As described in Section~\ref{ssec:MNIST-exp}, we perform Langevin sampling using a quadratic potential (\ref{eq:mnist-potential}) with $\lambda = 4$ around a sample $Z_\star$ from the training dataset. The trajectories were generated with step size $h=10^{-4}$ for a total of $100$k steps, simulating a time horizon of $T=10$.
\paragraph{Disclosure of Compute Resources} All experiments were conducted on the authors' M4 Apple Silicon MacBook with 16Gb of RAM, and M3 Apple Silicon MacBook Air with 24Gb of RAM.
\section{Additional Experimental Results}\label{apdx:additional-exps}
Here we present complementary experimental results pertaining to those of Section \ref{sec:experiments}. In particular, we provide more details for Tables \ref{tab:geom-fide} and \ref{tab:SR-metrics}, along with some interpretation of the results. 
\subsection{Comparison with Naive Methods}\label{apdx:comparison-apdx}
To further support the unique correctness of our method (using \emph{both} $\mathsf{L}$ and $\mathsf{\Gamma}$), we compare experimental results of running more ``naive'' IMDs for spherical Brownian motion. As shown in Table \ref{fig:methods-comparison} and Figure \ref{fig:ballballball}, the score model is not the only driver of geometric fidelity. In particular, we compare Algorithm \ref{alg:IMD-with-drgd} with two other methods, inspired by recent applications of diffusion geometry to flow matching \cite{bamberger2025carreduchampflow} and using score-based models as retractions for Riemannian optimization in the implicit manifold setting \cite{kharitenko2025landingscoreriemannianoptimization}. The first method solely relies on the CDC operator, resulting in dynamics driven by 
\begin{IEEEeqnarray}{rCl}\label{eq:CDC-dynamics}
    \hbX_{\ell+1} & =  \hbX_\ell  + \sqrt{h}\mathsf \Gamma^{\frac{1}{2}}(\hbX_\ell)\xi_{\ell+1},
\end{IEEEeqnarray}
 The second method performs regular Brownian motion in $\mathbb{R}^n$, and relies on the DRGD step
\begin{IEEEeqnarray}{rCl}\label{eq:DRGD-dynamics}
    \mathsf U_{\ell} & = &  \hbX_\ell  + \sqrt{h}\xi_{\ell+1}, \\
    \hbX_{\ell+1} & = &  \mathsf U_\ell + \sigma^2s_\sigma(\mathsf U_\ell).
\end{IEEEeqnarray}
\begin{table}[h!]
\centering
\begin{comment}
\begin{figure}[htbp]
\centering
% -------- Row 1: diffusion plots --------
\begin{minipage}{0.32\textwidth}
    \centering
    \includegraphics[width=\linewidth]{figs/comparison/IMD_diffusion_plot.pdf}
    \subcaption{IMD}
\end{minipage}
\hfill
\begin{minipage}{0.32\textwidth}
    \centering
    \includegraphics[width=\linewidth]{figs/comparison/CDC_diffusion_plot.pdf}
    \subcaption{CDC only}
\end{minipage}
\hfill
\begin{minipage}{0.32\textwidth}
    \centering
    \includegraphics[width=\linewidth]{figs/comparison/BM_score_diffusion_plot.pdf}
    \subcaption{$\mathbb{R}^3$ Noise + DRGD Retraction}
    \vspace{10pt}
\end{minipage}
\end{figure}
\end{comment}
% -------- Row 2: error table --------
\begin{tabular}{lcc}
\toprule
\textbf{Method} & \textbf{Average radial error} & \textbf{Max. radial error} \\
\midrule
\textbf{IMD}                                      & $\mathbf{0.009 \pm 0.004}$ & $\mathbf{0.021 \pm 0.007}$ \\
CDC only                    & $0.474 \pm 0.014$ & $0.866 \pm 0.026$ \\
$\mathbb{R}^3$ Noise + DRGD Retraction   & $1.705 \pm 0.983$ & $3.732 \pm 1.629$ \\
\bottomrule
\end{tabular}
\vspace{10pt}
\caption{
Radial errors over trajectories of $5000$ steps with step size $h=10^{-3}$. We report the time-averaged radial error and the maximum radial error along each
trajectory, both averaged over $50$ independent runs ($\pm$ one standard deviation). IMDs maintain low, non-cumulative errors, whereas CDC only \eqref{eq:CDC-dynamics} and $\mathbb{R}^3$ noise + DRGD \eqref{eq:DRGD-dynamics} accumulate errors.
}
\label{fig:methods-comparison}
\end{table}

Observing Table \ref{fig:methods-comparison}, we immediately notice two things: (i) the CDC only construction accumulates errors in the normal direction and (ii) the isotropic noise + DRGD is not constrained to the sphere's surface. We interpret the first behaviour as evidence that the CDC as a projector onto $T_x\mathcal{M}$ is correct, but will induce normal error accumulation unless the step size is minuscule. Secondly, we notice that the score model based correction is not "strong" enough on its own to counter the isotropy of $\mathbb{R}^3$ Brownian motion, resulting in failure to maintain geometric consistency. 

\subsection{Swiss Roll}\label{apdx:other-geoms}
\begin{comment}
    
\begin{wrapfigure}{r}{0.30\textwidth}
    \vspace{-20pt}
    \centering
        \includegraphics[scale=0.1]{figs/swiss_roll.pdf}
    \caption{BM on the Swiss Roll.}
    \label{fig:swiss-roll-BM}
    \vspace{-15pt}
\end{wrapfigure}
\end{comment}

To further test IMDs on implicit manifolds, we consider the Swiss roll dataset. Since Euclidean proximity does not reflect geodesic locality, unconstrained or projection-based dynamics may induce transitions across nearby folds that are in fact distant along the manifold. IMDs avoid this failure mode by constructing dynamics at the level of the generator, ensuring that motion remains intrinsically constrained to the manifold. The data set for the Swiss roll experiments consists of $10,000$ samples. We compute trajectories with step size $h=10^{-3}$ for $10,000$ steps from $50$ random starting points.

To quantify complementary aspects of the induced dynamics, we report the following metrics:
\begin{itemize}
    \item \emph{Average nearest-neighbour (NN) distance}: the average ambient distance to the closest data point (best seen in Figs.~\ref{subfig:SR-a}, \ref{subfig:SR-b}, \ref{subfig:SR-c}). Off-manifold deviations are reflected in larger NN distances.
    \item \emph{Maximal jump}: a measure of the locality of the diffusion. We project the ambient trajectories onto the ``unrolled'' latent sheet of the Swiss roll and compute the distance between consecutive steps. This quantity should remain small under faithful manifold dynamics, but becomes large whenever the trajectory jumps across folds. The qualitative differences between methods are most visible in Figs.~\ref{subfig:SR-d}, \ref{subfig:SR-e}, \ref{subfig:SR-f}.
    % \item \emph{Spread} is the covariance of the endpoints $\textnormal{Cov}(X_T)$, which reflects how much the diffusion process has explored the manifold.
    % \item \emph{Mean Squared Displacement} reflects the average latent displacement, analogously to the spread.
\end{itemize}

 IMDs preserve geometric fidelity and local behaviour while still exploring the manifold. The empirical results (reported in Table~\ref{tab:SR-metrics} and Fig.~\ref{fig:SR-comparison}) confirm that the operator-based construction remains consistent with theory across all metrics, reflecting that IMDs approximate the correct infinitesimal dynamics rather than relying on heuristic geometric corrections.

% \begin{table}[t]
% \centering
% \vspace{5pt}
%     \scalebox{1}{
%         \begin{tabular}{lcccc}
%         \toprule
%         Method &  Avg NN Dist $\downarrow$ & Max Jump $\downarrow$ & Spread $\uparrow$ & MSD $\uparrow$ \\
%         \midrule
%         Only DRGD &  $0.65 \pm 0.431$ & $6.294$ & $16.701$ & ${20.172 \pm 2.442}$ \\
%         CDC &  $4.297 \pm 2.637$ & $6.464$ & ${112.320}^\dagger$ & ${123.304 \pm 12.268}^\dagger$ \\
%         \textbf{IMD} &  $\mathbf{0.298 \pm 0.075}$ & $\mathbf{2.141}$ &  $\mathbf{78.564}$ & $\mathbf{78.816 \pm 8.739}$\\
%         \bottomrule
%         \end{tabular}
%     }
%     \vspace{10pt}
% \caption{Swiss roll evaluation for a latent sheet of size $14\times 20$. Higher is better for spread and MSD; lower is better for NN distance and max jump. Reported values are means and standard errors over 50 starting points, 100 paths of 10k steps each. {\footnotesize
% $^\dagger$Reported for completeness but not interpreted, as these values are implausible due to non-local behavior.}}
% \label{tab:SR-metrics}
% \vspace{-15pt}
% \end{table}

\begin{table}[h!]
\centering
\vspace{5pt}
    \scalebox{1}{
        \begin{tabular}{lcc}
        \toprule
        \textbf{Method} &  \textbf{Average NN Distance} $\downarrow$ & \textbf{Max Jump} $\downarrow$ \\
        \midrule
        \textbf{IMD} & $\mathbf{0.019 \pm 0.000}$ & $\mathbf{0.116}$ \\
        CDC only & $0.736 \pm 0.035$ & $0.420$ \\
       $\mathbb{R}^3$ Noise + DRGD & $0.073 \pm 0.000$ & $0.375$ \\
        \bottomrule
        \end{tabular}
    }
    \vspace{10pt}
\caption{Swiss roll evaluation for a latent sheet. Lower is better for both NN distance and max jump. Reported values are means and standard errors over $50$ starting points with a fixed stepsize $h=10^{-3}$ for $10,000$ steps each.}
\label{tab:SR-metrics}
\vspace{-15pt}
\end{table}

\begin{figure}[h!]
\centering

% -------- Row 1: diffusion plots --------
\begin{minipage}{0.32\textwidth}
    \centering
    \includegraphics[width=\linewidth]{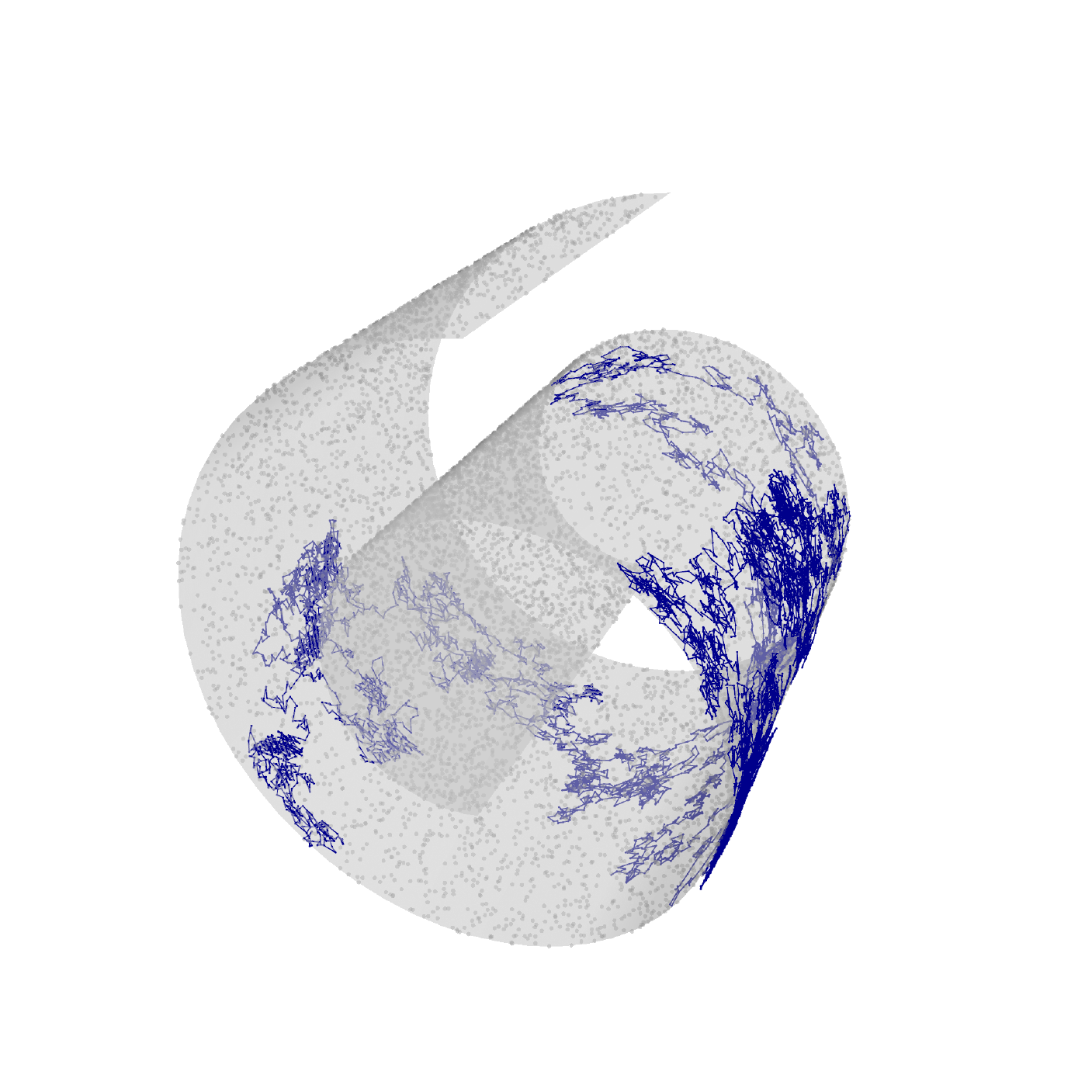}
    \subcaption{IMD}
    \label{subfig:SR-a}
\end{minipage}
\hfill
\begin{minipage}{0.32\textwidth}
    \centering
    \includegraphics[width=\linewidth]{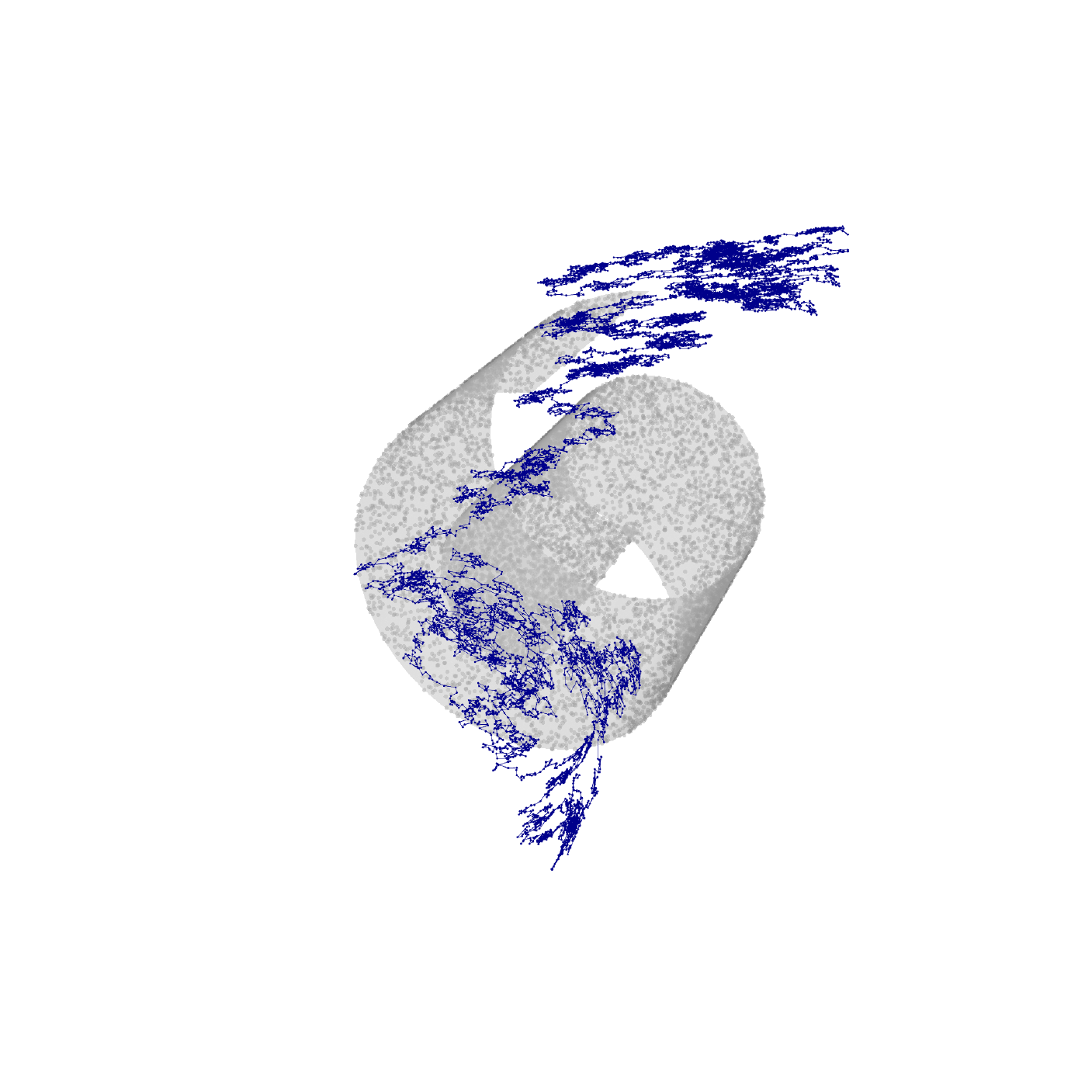}
    \subcaption{CDC only}
    \label{subfig:SR-b}
\end{minipage}
\hfill
\begin{minipage}{0.32\textwidth}
    \centering
    \includegraphics[width=\linewidth]{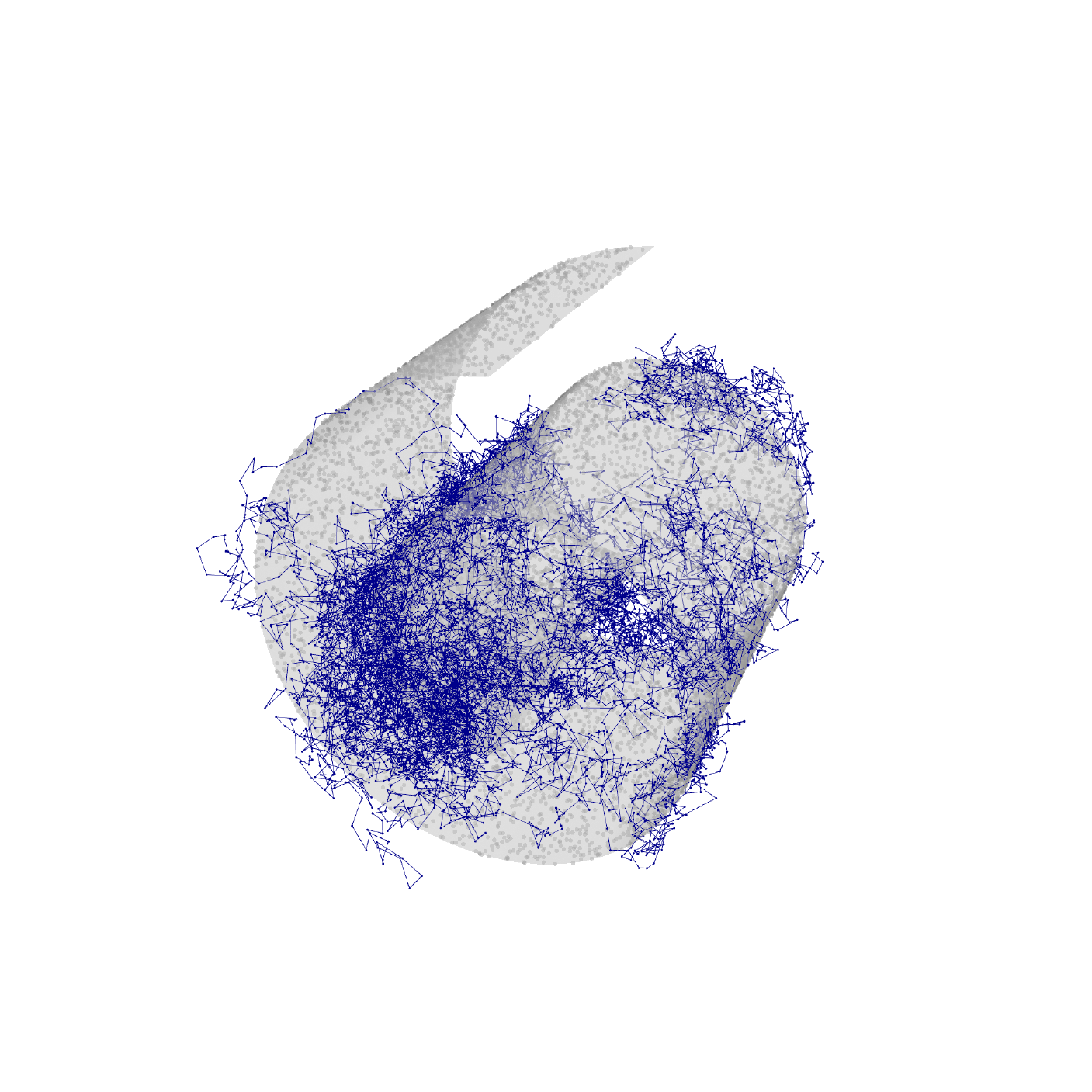}
    \subcaption{$\mathbb{R}^3$ Noise + DRGD Retraction}
    \label{subfig:SR-c}
\end{minipage}

\vspace{0.5em}

% -------- Row 2: error plots --------
\begin{minipage}{0.32\textwidth}
    \centering
    \includegraphics[width=\linewidth]{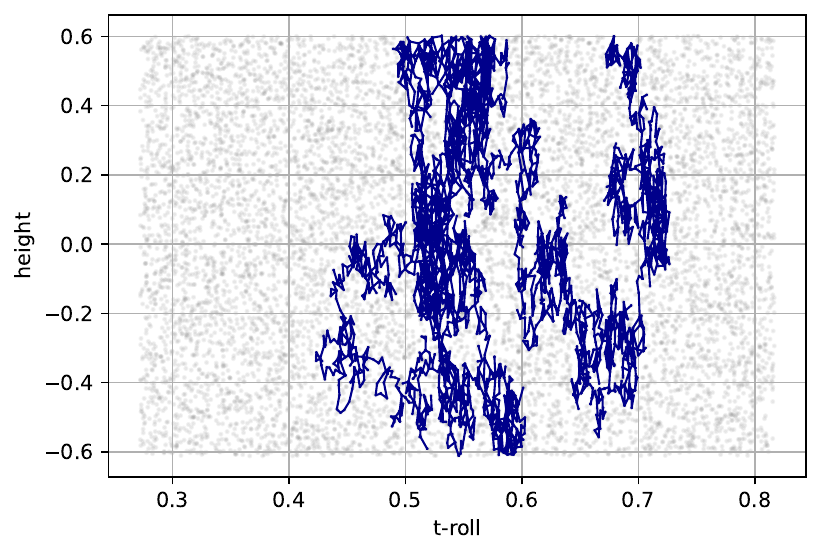}
    \subcaption{IMD on latent sheet}
    \label{subfig:SR-d}
\end{minipage}
\hfill
\begin{minipage}{0.32\textwidth}
    \centering
    \includegraphics[width=\linewidth]{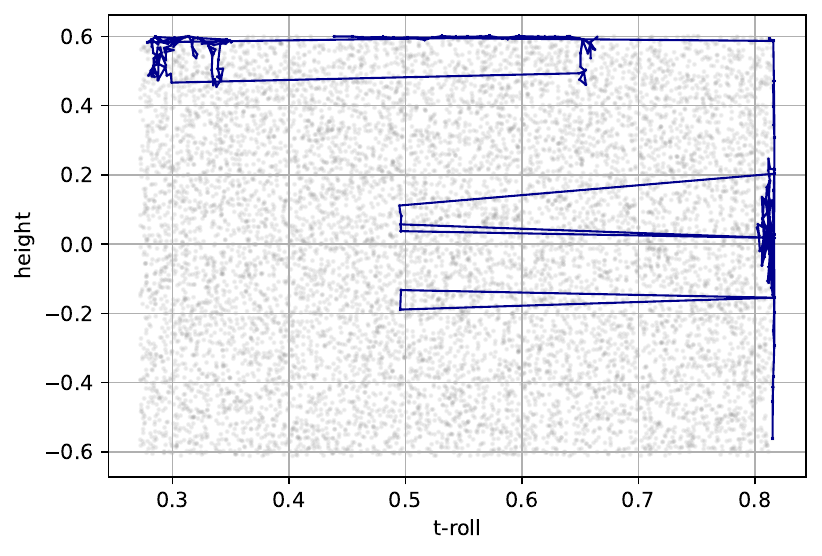}
    \subcaption{CDC on latent sheet}
    \label{subfig:SR-e}
\end{minipage}
\hfill
\begin{minipage}{0.32\textwidth}
    \centering
    \includegraphics[width=\linewidth]{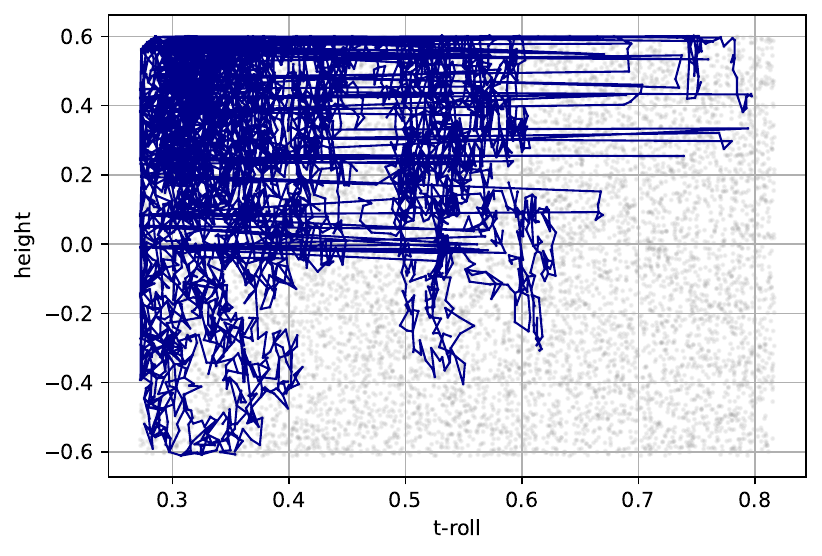}
    \subcaption{Naive noise on latent sheet}
    \label{subfig:SR-f}
\end{minipage}

\caption{
Comparison of diffusion trajectories ($h=10^{-3}$, $10$k steps) on the Swiss roll (top row) and corresponding latent sheet projections (bottom row) . Similar to Fig. \ref{fig:methods-comparison}, we notice error accumulation in the normal direction, which manifests non-locality as the discontinuous jump in the latent sheet. We also note the lack of continuous exploration of the isotropic BM compared to IMDs. 
}
\label{fig:SR-comparison}
\end{figure}

\subsection{Further Results on MNIST}
\begin{comment}
    
\begin{table}[h!]
    \centering
    \begin{tabular}{lcc}
			\toprule
			\multicolumn{1}{l}{} & $\mathbb{E}_\ell[\tau_\ell]$ & $\textnormal{Var}_\ell(\tau_\ell)$ \\ %& Decay $\hat{\lambda}$\\
			\midrule
			\multicolumn{1}{l}{DRGD} & $0.006$ & $0.0019$ \\ % & $-$ \\
			\multicolumn{1}{l}{IMD} & $\mathbf{0.0004}$ & $\mathbf{0.0003}$ \\ % & $-0.564\pm 0.057$ \\
			\bottomrule
		\end{tabular}
        \vspace{10pt}
    \caption{Local displacement corresponding to the MNIST trajectory of displayed in Fig. \ref{fig:MNIST-interpolation}.}
    \label{tab:MNIST-displacement}
\end{table}
We demonstrate ``Riemannian'' consistency of IMDs by evaluating the \emph{local squared displacement} 
\begin{equation}
    \tau_\ell := \delta_\ell^\top G(Z_\ell)\delta_\ell
\end{equation}
along a step $\delta_\ell := Z_{\ell+1}-Z_\ell$ of the latent path $(Z_\ell)$, where 
$G(\cdot) = J_D(\cdot)^\top J_D(\cdot)$
is the pullback metric induced by the Jacobian of the trained decoder $D:\mathcal{Z}\to\mathcal{X}$ \citep{shao2018riemannian}. This metric reflects how much the decoded image changes locally when moving from $Z_\ell$ to $Z_{\ell+1}$, as using $G$ rather than the Euclidean metric in latent coordinates better reflects the approximate Riemannian geometry of $\mathcal Z$. Table \ref{tab:MNIST-displacement} shows that DRGD exhibits a higher mean and standard deviation of these decoder-induced jumps,
indicating that its latent trajectory is less locally stable.
\end{comment}

\begin{figure}[h!]
    \centering

    \begin{minipage}[c]{0.5\textwidth}
        \centering
        \input{figs/bent-MNIST/bend-manifold}
        % or:
        % \includegraphics[width=\linewidth]{figures/my_figure.pdf}
    \end{minipage}
    \hfill
    \begin{minipage}[c]{0.48\textwidth}
        \caption{
        Implicit manifold interpolation in the MNIST latent space, guided by the potential \eqref{eq:mnist-potential}.
        IMDs (teal) follow the learned tangential geometry, while DRGD (red) rapidly moves toward the target through a score-based correction. Following Conjecture \ref{conj:curvature}, we interpret this behaviour to be consistent with locally flat intra-class geometry and lower-dimensional/curvature inter-class transitions.  
        }
        \label{fig:MNIST-bent-manifold}
    \end{minipage}

\vspace{-20pt}
\end{figure}
Figure \ref{fig:MNIST-bent-manifold} schematically further illustrates the off-manifold trajectory produced by DRGD: IMDs evolve along the tangential geometry of the learned data manifold, whereas DRGD acts as a normal score-based correction toward the target region.

\section{Support for Conjecture \ref{conj:curvature}}\label{apdx:conjecture-support}
The additional supporting evidence for Conjecture \ref{conj:curvature} is twofold. First, we will show how the second fundamental form $\Pi_x$ arises in the decomposition of the exponential map, and will relate it to current geometric integration schemes. Next, we will support our claim of stronger accumulating normal error as dimensions increase on the orthogonal matrix manifold $O(m)$. 

\subsection{Exponential Map Expansion}
The second fundamental form expansion from Eq. \eqref{eq:exp-decomp} explains how geometric Euler-Maruyama schemes \citep{zhan2026riemannianlangevindynamicsstrong, piggott2016geometric} differs from a naive extrinsic Euler-Maruyama step by a curvature-induced normal correction of order $h$. IMDs use an ambient Euler-Maruyama discretization of the data-driven generator, so they are consistent as $h\downarrow 0$, but at finite step size they may miss part of the random second-order normal geometry that a geometric Euler-Maruyama step or retraction would capture explicitly. This motivates investigating whether DRGD can act as a data-driven normal correction.

\paragraph{Spherical Error Decomposition} In the case of $\mathbb S^d_R$, $\Pi_x$ is analytically known, and the decomposition \eqref{eq:exp-decomp} yields
\begin{IEEEeqnarray}{rCl}
    \exp_x(\sqrt{h}\xi) & = &  x + \sqrt{h}\xi + \frac{h}{2}\Pi_x(\xi,\xi) + \mathcal{O}\left(\|\sqrt{h}\xi\|^3\right)\nonumber \\
    & = & x + \sqrt{h}\xi - \frac{h\|\xi\|^2}{2R^2}x+\mathcal{O}\left(\|\sqrt{h}\xi\|^3\right).
\end{IEEEeqnarray}
Since $\mathbb E[\|\xi\|^2]=d$, the average second-order normal correction is $-\frac{hd}{2R^2}x$, which exactly matches geometric Euler-Maruyama's mean curvature correction step. The per-step difference is thus 
\begin{equation}
    -\frac{h}{2R^2}\left(\|\xi\|^2-d\right)x,
\end{equation}
which has zero mean but fluctuates $\sim \frac{h}{R^2}\sqrt{\frac{d}{2}}$. We thus naturally expect normal error to accumulate as the intrinsic dimension $d$ increases, even by keeping the same step size.

\subsection{Orthogonal Matrix Group Experiment}\label{apdx:ssc-orthogonal-group}
We now further support the curvature argument from above on a different, less trivial manifold: the orthogonal matrix group $O(m)$, with an ambient dimension of $n=m^2$ (similar to the experiment in \citep{kharitenko2025landingscoreriemannianoptimization}), and intrinsic dimension $d =\frac{m(m-1)}{2}$. 

We chose $O(m)$ as an example, as the link to $\mathbb S^m$ is geometric: the sphere can be written as the homogeneous quotient
\begin{equation}
\mathbb S^{m-1}\simeq SO(m)/SO(m-1),
\end{equation}
where \(SO(m)\) acts by rotating directions and \(SO(d-1)\) is the stabilizer of one chosen direction. With the canonical invariant metric on \(SO(m)\), the projection
\begin{equation}
\pi:SO(m)\to \mathbb S^{m-1},\qquad \pi(Q)=Qe_m,
\end{equation}
is a Riemannian submersion, so the round metric and its constant positive curvature are inherited from the Lie-group geometry. Informally, the curvature of the sphere is encoded by the non-commutativity of infinitesimal rotations: brackets of horizontal directions in \(\mathfrak{so}(m)\) point into the stabilizer directions, producing the positive curvature of the quotient.

We evaluate IMD Brownian motion on a point cloud of $20,000$ samples for $m=3$; $50,000$ samples for $m=3$ and $100,000$ samples for $m=10$ from $O(m)$, and observe in Figure \ref{fig:stiefel-manifold} that the orthogonality error (measured as $\|\hbX_\ell\hbX_\ell^\top-I\|_F^2$, where $\hbX_\ell$ is the matrix produced by Eq. \eqref{eq:data-driven-EM}) increases as intrinsic dimension increases.

\begin{figure}[h]
    \centering
    \includegraphics[width=0.5\linewidth]{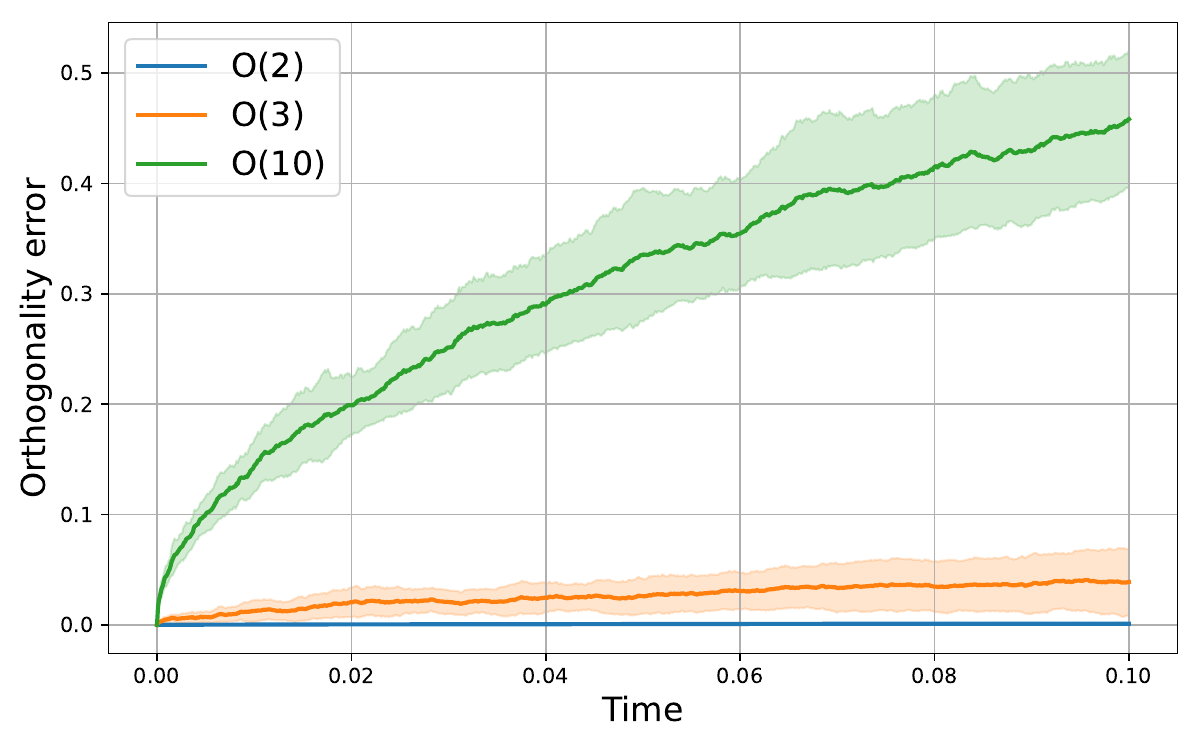}
    \caption{Orthogonality error of IMD Brownian motion trajectories on the orthogonal matrix manifold $O(m)$, for dimension $m \in \{2, 3, 10\}$ with a fixed step size of $10^{-4}$. Mean (solid line) and $\pm1$ standard deviation (shaded area) over 20 independent runs. Increasing error with increased dimensionality supports Conjecture \ref{conj:curvature}. }
    \label{fig:stiefel-manifold}
\end{figure}

\newpage
\end{document}